\theoremstyle{plain}
\newtheorem{theorem}{Theorem}
\newtheorem{lemma}[theorem]{Lemma}
\newtheorem{definition}[theorem]{Definition}
\newtheorem{proposition}[theorem]{Proposition}
\newtheorem{remark}{Remark}
\algnewcommand\algorithmicinput{\textbf{Input:}}
\algnewcommand\algorithmicoutput{\textbf{Output:}}
\algnewcommand\Input{\item[\algorithmicinput]}%
\algnewcommand\Output{\item[\algorithmicoutput]}%
\DeclareMathOperator*{\argmin}{arg\,min}
\newcommand{\norm}[1]{\left\lVert#1\right\rVert}
\newcommand{\notimplies}{%
  \mathrel{{\ooalign{\hidewidth$\not\phantom{=}$\hidewidth\cr$\implies$}}}}
\newcommand{\remove}[1]{}
\title{Multi-Robot Collision Avoidance under Uncertainty \\
with Probabilistic Safety Barrier Certificates}
\author{%
  Wenhao~Luo\thanks{Work done while interning at Microsoft Corporation, Redmond.} \\
  The Robotics Institute\\
  Carnegie Mellon University\\
%   Pittsburgh, PA 15213 \\
  \texttt{wenhaol@cs.cmu.edu} \\
  % examples of more authors
   \And
   Wen Sun \\
   Computer Science Department \\
   Cornell University\\
%   Ithaca, NY 14853 \\
   \texttt{ws455@cornell.edu} \\
   \AND
   Ashish Kapoor \\
   Microsoft Corporation \\
   Redmond, Washington 98033 \\
   \texttt{akapoor@microsoft.com} \\
  % \And
  % Coauthor \\
  % Affiliation \\
  % Address \\
  % \texttt{email} \\
  % \And
  % Coauthor \\
  % Affiliation \\
  % Address \\
  % \texttt{email} \\
}
\begin{document}

\maketitle

\begin{abstract}
Safety in terms of collision avoidance for multi-robot systems is a difficult challenge under uncertainty, non-determinism and lack of complete information. This paper aims to propose a collision avoidance method that accounts for both measurement uncertainty and motion uncertainty. In particular, we propose Probabilistic Safety Barrier Certificates (PrSBC) using Control Barrier Functions to define the space of admissible control actions that are probabilistically safe with formally provable theoretical guarantee. By formulating the chance constrained safety set into deterministic control constraints with PrSBC, the method entails minimally modifying an existing controller to determine an alternative safe controller via quadratic programming constrained to PrSBC constraints. The key advantage of the approach is that no assumptions about the form of uncertainty are required other than finite support, also enabling worst-case guarantees. 
We demonstrate effectiveness of the approach through experiments on realistic simulation environments.
\end{abstract}

\section{Introduction}
Safe control is one of the most important task that needs to be addressed in the realm of large-scale multi-robot systems.
For example, consider the problem of building an automatic collision avoidance system (ACAS) for aerial robots that would scale up as the autonomous aerial traffic increases. Such a system needs to be  computationally efficient for execution in real-time and robust to various real-world factors that include uncertainty, non-determinism and approximations made in the formulation of the system.
Measurement uncertainty in the system arises from various estimation or prediction procedures in real-world that rely on sensory information ( e.g. LIDARS, on-board GPS) being collected in real-time to get robots state information. 
On the other hand, non-determinism often arises from our in-ability to model various exogenous variables that are part of our operating environment, e.g. phenomena such as wind gusts. Ability to pro-actively deal with such measurement and motion uncertainty is fundamental in the safety considerations. 

In this work, we consider the problem of real-time safe control in terms of reactive collision avoidance for crowded multi-robot team operating in a realistic environment with their existing task-related controllers or control policies. Akin to real-world we consider scenarios with both measurement uncertainty (e.g. noisy sensing and localization) and motion uncertainty (e.g. disturbances from environments and inaccurately modelled dynamics). 
Many safe control methods that attempt to address the measurement uncertainty often make restrictive assumptions, such as Gaussian representation of the uncertainties \cite{gopalakrishnan2017prvo, Sadigh-RSS-16, zhu2019chance, zhu19b-uavc, vitus2012hybrid}.
%and/or simple dynamics model of the robots \cite{Sadigh-RSS-16, zhu19b-uavc, gopalakrishnan2017prvo, wang19distributed}.
Approaches that consider bounded localization or control disturbance using conservative bounding volumes \cite{claes2012collision, hardy2013contingency, kamel2017robust, park2018fast} often overestimate the probability of collisions.

This paper proposes a novel approach that provides chance-constrained collision-free guarantees for multi-robot system under measurement and motion uncertainty. At the heart of the method is the idea of probabilistic safety barrier certificates (PrSBC) that enforces the chance constrained collision avoidance with deterministic constraints over an existing controller. With PrSBC constraints, the safety controller can be achieved by minimally modifying the existing controllers in real-time as done by other control barrier function approaches \cite{ames2019control, wang2017safety}. This hence formally satisfies the collision-avoidance chance-constraints while staying as close to the original robot behaviors as possible. Our work is most closely related to the work on safety barrier certificates (SBC) for multi-robot collision avoidance \cite{wang2017safety} using permissive control barrier functions (CBF) \cite{ames2017control, ames2019control}. While the prior work focused on deterministic settings, our goal here is to provide a safety envelope around an existing controller that accounts for uncertainties and non-determinism in a probabilistic setting. 
There are several \textbf{advantages} of the proposed PrSBC.
First, in contrast of other probabilistic collision avoidance approaches that directly constrain the inter-robot distance \cite{wang19distributed, zhu19b-uavc, zhu2019chance}, the proposed method produces a more permissive set for the controllers with a tighter bound. Second, the PrSBC naturally inherits the forward invariance from CBF, e.g. robots staying in the collision-free set at all time, and thus enabling us to prove guarantees throughout the continuous time scale. Finally, it is natural to apply the chance constrained collision avoidance with PrSBC under both centralized and decentralized settings to bridge learning based methodologies and model based safety-critical control with provable safety guarantee.
For example, one may use learning techniques such as Gaussian Processes to learn one or more partially unknown dynamical systems with noisy uncertainties and use our PrSBC approach to compute certified probablistically safe policies to collect more data for further improving models. We believe integrating dynamical system learning with our PrSBC framework to guarantee safe learning to control is an important future direction.

The key underlying assumption in our method is that the uncertainties arising due to sensor measurements, incomplete dynamics and other exogenous variables
have finite support. This is a reasonable assumption for many of the multi-robot scenarios. For example, we can safely assume that true positions of robots, or the amount of wind gusts etc. are bounded within certain sensor specifications or physical parameters respectively (e.g. \cite{herbert2017fastrack}).
We use the task similar to automatic collision avoidance system for aerial robots as a motivating application. Our experiments explore the proposed computation of PrSBC controller in both centralized and decentralized settings, which can handle both the uncertainties as well as environmental disturbances while continuously guaranteeing safety. 
In summary, the \textbf{core contributions of this paper} are as follows:
\begin{enumerate}
  \item A novel chance-constrained collision avoidance method with Probabilistic Safety Barrier Certificates (PrSBC) ensuring provable forward invariance under uncertainties with bounded support.
\item Formal proof of existence of PrSBC in a closed form.
  \item Experimental results on the task similar to automatic collision avoidance for aerial robots that demonstrate efficiency, scalability and distributed computation.
\end{enumerate}%\vspace{-0.6cm}

\section{Related Work}

Collision avoidance for multi-robot system operating in dynamic environments have been studied for safety consideration over the years.
Reactive methods such as reciprocal velocity obstacles (RVO) \cite{alonso2012reciprocal, alonso2013optimal, van2008reciprocal, van2011reciprocal}, safety barrier certificates (SBC) \cite{borrmann2015control, wang2017safety, wang2017safe}, and buffered Voronoi cells \cite{zhou2017fast} have been presented to compute on-line multi-robot collision-free motions in a distributed manner. 
To account for uncertainties associated to the robot state information and motion model, the deterministic collision avoidance methods have been extended to probabilistic representations \cite{gopalakrishnan2017prvo, jyotish2019pivo} with chance constraints formulation \cite{wang19distributed, zhu19b-uavc,  zhu2019chance} assuming Gaussian representation of uncertainties in order to derive close form solution. 
%to handle the bounded localization uncertainty, 
The concept of velocity obstacles is adopted to develop enlarged conservative bounding volumes around the robot \cite{claes2012collision, hardy2013contingency, kamel2017robust, park2018fast}. 
It remains challenging when prior knowledge of the uncertainty model is not available or it is not necessarily Gaussian, e.g. readings from an on-board GPS sensor that only have an expected value with an finite support as accuracy.

Another family of reactive collision avoidance approaches is the recent optimization-based safety control using control barrier function \cite{ames2017control, ames2019control, nguyen2016exponential, wang2017safe, wang2017safety, xu2018safe}. The safety controller is able to minimally revise the nominal controller in the context of quadratic programming and ensures the robots remain in the safety set at all time, leading to a minimally invasive safe control behavior. In \cite{wang2017safety}, the control barrier function is employed to develop the Safety Barrier Certificates (SBC) for multi-robot systems, depicting a non-conservative safety envelope for the multi-robot controller from which the robots stay collision-free at all time. Extensions to higher order nonlinear system dynamics using SBC and Exponential Control Barrier Function (ECBF) have been introduced in \cite{nguyen2016exponential, wang2017safe, wang2018safe}. Very recently, safe learning using CBF and ECBF for safety consideration are presented in \cite{khojasteh2019probabilistic, scholler2020constant, taylor2020learning, wang2018safe} that learns the motion disturbance or partially known dynamics with \emph{perfect} localization information. In this paper, we propose the probabilistic safety barrier certificates (PrSBC) for multi-robot systems which extends the deterministic SBC \cite{wang2017safety} to a probabilistic setting to account for both localization and motion uncertainties of the ego robot and other robots/obstacles.
No assumptions about the uncertainty model are required other than finite support. We discuss in Section~\ref{sec:PrSBC} that the PrSBC could handle other uncertainty models as well, and hence is useful for uncertainty-aware safe learning applications in the future work.

\section{Problem Statement}
\subsection{Robot and obstacle model}\label{subsec: models}
Consider a team of $N$ robots moving in a shared $d$-dimensional workspace. Each robot $i\in \mathcal{I}= \{1,\ldots,N\}$ is centered at the position $\mathbf{x}_i\in\mathcal{X}_i\subset \mathbb{R}^d$ and enclosed with a uniform safety radius $R_i\in \mathbb{R}$. The stochastic dynamical system $\dot{\mathbf{x}}_i$ in control affine form with noise
and the noisy observation $\hat{\mathbf{x}}_i\in \mathbb{R}^d$ of each robot $i$ are described as follows.
\begin{equation}\label{eq:dynamics}\footnotesize
\begin{split}
\dot{\mathbf{x}}_i &= f_i(\mathbf{x}_i,\mathbf{u}_i)+\mathbf{w}_i=F_i(\mathbf{x}_i)+G_i(\mathbf{x}_i)\mathbf{u}_i+\mathbf{w}_i\;, \quad \mathbf{w}_i\sim U (-\Delta \mathbf{w}_i,\Delta \mathbf{w}_i) \\
        \hat{\mathbf{x}}_i &= \mathbf{x}_i+\mathbf{v}_i\;, \quad \mathbf{v}_i\sim U(-\Delta \mathbf{v}_i,\Delta \mathbf{v}_i)
    \end{split}
\end{equation}
where $\mathbf{u}_i\in \mathcal{U}_i\subseteq \mathbb{R}^m$ denotes the control input.
$F_i$ and $G_i$ are locally Lipschitz continuous.
%\wl{
The deterministic system dynamics $f_i(\mathbf{x}_i,\mathbf{u}_i)=F_i(\mathbf{x}_i)+G_i(\mathbf{x}_i)\mathbf{u}_i$ in control affine form is general and could describe a large family of nonlinear systems, e.g. 3-dof differential drive vehicles with unicycle dynamics (\cite{pickem2017robotarium, wang2017safety}), 12-dof quadrotors with underactuated system (\cite{wang2018safe, zhou2014vector}), bipedal robots, automotive vehicle, and Segway robots \cite{ames2019control, taylor2020learning}.
$\mathbf{w}_i,\mathbf{v}_i \in\mathbb{R}^d$ are the uniformly distributed process noise and the measurement noise respectively and considered as continuous independent random variables with finite support.
A uniform distribution is a natural choice for these noise processes, however, most of our analysis does not require the exact form except that the support is finite. 
This finite support can vary at each time-point and come from a perception module, a state estimator or other physical parameters of the system.

\noindent\textbf{Obstacle Model:}
Similar to the robots, other static or moving obstacles $k\in \mathcal{O}=\{1,\ldots,K\}$ are also modeled as a rigid sphere located at $\mathbf{x}_k\in\mathbb{R}^d$ with the safety radius $R_k\in \mathbb{R}$. The measurement of obstacle location via robot sensor is modeled as $\hat{\mathbf{x}}_k=\mathbf{x}_k+\mathbf{v}_k\in\mathbb{R}^d$ with bounded uniformly distributed noise $\mathbf{v}_k\sim U(-\Delta \mathbf{v}_k,\Delta \mathbf{v}_k)$. As commonly assumed in other collision avoidance work (\cite{gopalakrishnan2017prvo, zhu2019chance, scholler2020constant}), we consider the piece-wise constant obstacle' velocity to be detected by the robots as $\hat{\mathbf{u}}_k$ with a bounded noise, rendering the obstacle dynamics as $\dot{\mathbf{x}}_k=\mathbf{u}_k=\hat{\mathbf{u}}_k+\mathbf{w}_k\in \mathbb{R}^d,\;\mathbf{w}_k\sim U (-\Delta \mathbf{w}_k,\Delta \mathbf{w}_k)$. The finite supports of $\mathbf{v}_k,\mathbf{w}_k$ are also assumed to be known by the robots. 

Denote the joint robot states as $\mathbf{x}=\{\mathbf{x}_1,\ldots,\mathbf{x}_N\}\in\mathcal{X}\subset \mathbb{R}^{d\times N}$ and the joint obstacle states as $\mathbf{x}_o=\{\mathbf{x}_1,\ldots,\mathbf{x}_K\}\in\mathcal{X}_o\in \mathbb{R}^{d\times K}$. For any pair-wise inter-robot or robot-obstacle collision avoidance between robots $i,j\in\mathcal{I}$ and obstacles $k\in\mathcal{O}$, the safety of $\mathbf{x}$ is defined as follows.
%\vspace{-0.4cm}
{\footnotesize
\begin{align}
h_{i,j}^s(\mathbf{x})&=\norm{\mathbf{x}_i-\mathbf{x}_j}^2-(R_i+R_j)^2\;, \; \forall i>j\;,\quad % \nonumber\\
h_{i,k}^s(\mathbf{x}, \mathbf{x}_o)=\norm{\mathbf{x}_i-\mathbf{x}_k}^2-(R_i+R_k)^2\;,\; \forall i,k \label{eq:safe_h}\\ 
\mathcal{H}_{i,j}^s&=\{\mathbf{x}\in \mathbb{R}^{d\times N}:h^s_{i,j}(\mathbf{x})\geq 0\}\;,\;\forall i>j\;,\quad
\mathcal{H}_{i,k}^s=\{\mathbf{x}\in \mathbb{R}^{d\times N}:h^s_{i,k}(\mathbf{x},\mathbf{x}_o)\geq 0\}\;,\; \forall i,k \label{eq:safesetij}
\end{align}
}\vspace{-0.6cm}

\subsection{Chance-constrained collision avoidance for safety}
As the robots only have access to the noisy measurements on the states of the robots and obstacles, the positions of the robots and obstacles are modeled as random variables with a finite support.
The collision avoidance constraints can then be considered in a chance-constrained setting for each pairwise robots $i,j$ and robot-obstacle $i,k$. Formally, given the minimum admissible probability of safety  $\sigma,\sigma_o\in [0,1]$ predefined by the user, it is required that:
\begin{equation}\label{eq:prob_safe}\footnotesize
\begin{split}
       \text{Pr}(\mathbf{x}_i,\mathbf{x}_j\in \mathcal{H}_{i,j}^s)\geq \sigma\;,\quad \forall i> j\;, \qquad 
       \text{Pr}(\mathbf{x}_i,\mathbf{x}_k\in \mathcal{H}_{i,k}^s)\geq \sigma_o\;,\quad \forall i,k 
\end{split}
\end{equation}
$\text{Pr}(\cdot)$ indicates the probability of an event. Note that when $\sigma, \sigma_o$ are set to $1$, the conditions naturally lead to the worst-case collision avoidance with enlarged bounded volume as discussed in section~\ref{sec:PrSBC}. Such worst-case guarantees can lead to a conservative behavior, thus often there are advantages in maintaining  a probabilistic safety. 

Assume that each robot has a task-related controller $\mathbf{u}^*_i\in \mathbb{R}^m$.
We consider the chance-constrained collision avoidance as a one-step optimization problem that minimally modifies $\mathbf{u}^*_i$ for each robot $i$, while satisfying the desired probabilistic safety in (\ref{eq:prob_safe}). Formally we solve the following Quadratic Program (QP) under the safety constraints:
{\footnotesize
\begin{align}
\min_{\mathbf{u}\in \mathbb{R}^{mN}} &\quad \sum_{i=1}^{N}\norm{\mathbf{u}_i-\mathbf{u}^*_i}^2\quad \text{subject to} \; (\ref{eq:prob_safe})\; \text{and}\;
\norm{\mathbf{u}_i}\leq \alpha_i,\forall i\in\{1,\ldots,N\} \label{eq:rawobj}
\end{align} 
}%
where $\mathbf{u}\in\mathcal{U}\subset \mathbb{R}^{mN}$ is the bounded joint control input
%space of joint control inputs 
of all the robots with bounded magnitude $\alpha_i,\forall i$.
Next, we briefly describe the background of Safety Barrier Certificates (SBC) \cite{wang2017safety}. Section \ref{sec:PrSBC} then presents our method of Probabilistic Safety Barrier Certificates (PrSBC) that utilizes control barrier functions \cite{ames2019control} to remap the probabilistic safety set constraints (\ref{eq:prob_safe}) from the state space $\mathcal{X}\subset \mathbb{R}^{d\times N}$ to the control space $\mathcal{U}\subset \mathbb{R}^{mN}$.

\section{Background: Safety Barrier Certificates (SBC)}

In this section we review the formulation of the \emph{deterministic} safety control constraints.
Without loss of generality, we represent a desired safety set $\mathcal{H}^s$ 
%in (\ref{eq:safeset}) 
using function $h^s(\mathbf{x})$ 
%from (\ref{eq:safe_h}) 
as:
\begin{equation}\label{eq:safeset_agg}\footnotesize
    \mathcal{H}^s = \{\mathbf{x}\in \mathbb{R}^{d\times N}\quad |\quad h^s(\mathbf{x})\geq 0\}
\end{equation}
We summarize the conditions on controllers $\mathbf{u}\in\mathcal{U}\subseteq \mathbb{R}^{mN}$ based on Zeroing Control Barrier Functions (ZCBF) (\cite{ames2017control}) and the Safety Barrier Certificates (SBC) (\cite{wang2017safety}) to guarantee {\it forward invariance} of safety. Formally, a safety condition is forward-invariant if $\mathbf{x}(t=0)\in \mathcal{H}^s$ implies $\mathbf{x}(t)\in\mathcal{H}^s$ for all $t>0$ with the designed satisfying controller at each time step. 
The Theorem of ZCBF and forward invariance from \cite{ames2017control, wang2017safety} is summarized as the following Lemma.
\begin{lemma}\label{lemma:forward}
Given the dynamical system in equ. (\ref{eq:dynamics}) without uncertainties, i.e. $\mathbf{w}_i=0,\forall i\in\mathcal{I}$ and the set $\mathcal{H}^s$ defined by equ. (\ref{eq:safeset_agg}) for the continuously differentiable function $h^s: \mathbb{R}^{d\times N}\rightarrow \mathbb{R}$. The function $h^s$ is a ZCBF and the admissible control space $S(\mathbf{x})$ for each time step can be defined as
\begin{equation}\label{eq:sx_general}\footnotesize
    S(\mathbf{x})=\{\mathbf{u}\in\mathcal{U}\;|\;\dot{h}^s(\mathbf{x},\mathbf{u})+\kappa(h^s(\mathbf{x}))\geq 0\},\;\mathbf{x}\in \mathcal{X},
\end{equation}
where $\kappa$ is an extended class-$\mathscr{K}$ function. 
Then any Lipschitz continuous controller satisfying $\mathbf{u}\in S(\mathbf{x})$ at each time step for the system (\ref{eq:dynamics}) renders the set $\mathcal{H}^s$ forward invariant, i.e. robots stay collision-free at all times. 
\end{lemma}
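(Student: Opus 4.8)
The plan is to reduce the set-invariance claim to a one-dimensional differential inequality for $h^s$ evaluated along closed-loop trajectories, and then conclude nonnegativity of $h^s$ via a comparison argument. The entire difficulty concentrates in controlling how fast $h^s$ can decrease along a trajectory; once we establish that $\dot h^s$ cannot drop below $-\kappa(h^s)$, the sign-preserving structure of the extended class-$\mathscr{K}$ function $\kappa$ does the rest. Since this is the standard ZCBF forward-invariance theorem of \cite{ames2017control, wang2017safety} restated in the multi-robot notation, the real content is the transcription together with the regularity bookkeeping.

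First I would settle well-posedness of the closed-loop system. Because the controller $\mathbf{u}$ is assumed Lipschitz continuous in $\mathbf{x}$ and the drift and input maps $F_i, G_i$ are locally Lipschitz (stated below equ.~(\ref{eq:dynamics})), the closed-loop vector field $\mathbf{x}\mapsto F(\mathbf{x})+G(\mathbf{x})\mathbf{u}(\mathbf{x})$ is locally Lipschitz, so Picard--Lindel\"of guarantees a unique $C^1$ solution $\mathbf{x}(t)$ on a maximal interval. This is what lets me differentiate $h^s$ along trajectories: since $h^s$ is continuously differentiable and $\mathbf{x}(\cdot)$ is $C^1$, the chain rule gives $\tfrac{d}{dt}h^s(\mathbf{x}(t)) = \dot h^s(\mathbf{x}(t),\mathbf{u}(t))$ for all $t$. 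Invoking the hypothesis $\mathbf{u}\in S(\mathbf{x})$ from equ.~(\ref{eq:sx_general}) then yields the scalar differential inequality $\tfrac{d}{dt}h^s(\mathbf{x}(t)) \geq -\kappa\big(h^s(\mathbf{x}(t))\big)$ holding along the trajectory.

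Next I would introduce the scalar comparison system $\dot y = -\kappa(y)$ with initial condition $y(0)=h^s(\mathbf{x}(0))\geq 0$, which is well posed since $\kappa$ is continuous. Because $\kappa$ is an extended class-$\mathscr{K}$ function we have $\kappa(0)=0$, so $y\equiv 0$ is an equilibrium of the comparison system; by uniqueness of its solutions no trajectory starting at $y(0)\geq 0$ can cross zero, hence $y(t)\geq 0$ on its interval of existence. Applying the comparison lemma for scalar differential inequalities to the bound above gives $h^s(\mathbf{x}(t)) \geq y(t)\geq 0$, i.e.\ $\mathbf{x}(t)\in\mathcal{H}^s$ for all $t>0$ whenever $\mathbf{x}(0)\in\mathcal{H}^s$. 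This is precisely forward invariance of $\mathcal{H}^s$, and since $\mathcal{H}^s$ encodes the pairwise nonnegativity of the $h^s_{i,j}$ in equ.~(\ref{eq:safe_h}), the robots remain collision-free at all times.

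I expect the main obstacle to be regularity bookkeeping rather than any deep idea. One must ensure the ZCBF inequality of equ.~(\ref{eq:sx_general}) is assumed on a neighborhood of $\mathcal{H}^s$ (not merely on its boundary), so the comparison argument remains valid while $h^s$ passes near zero, and one must rule out finite escape time so that ``for all $t>0$'' is meaningful. The latter is handled by the boundedness of the control set through $\norm{\mathbf{u}_i}\leq\alpha_i$ together with boundedness of the workspace $\mathcal{X}$, which lets a standard continuation argument extend the unique solution globally while it stays in $\mathcal{H}^s$.
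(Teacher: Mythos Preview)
The paper does not actually prove Lemma~\ref{lemma:forward}; it is stated as background, explicitly presented as a summary of the ZCBF forward-invariance theorem from \cite{ames2017control, wang2017safety}, and no argument is given beyond the citation. Your proposal correctly reconstructs the standard proof that underlies those references: reduce to the scalar differential inequality $\tfrac{d}{dt}h^s(\mathbf{x}(t))\geq -\kappa(h^s(\mathbf{x}(t)))$ along closed-loop trajectories and apply a comparison argument against $\dot y=-\kappa(y)$, using $\kappa(0)=0$ to conclude $h^s$ stays nonnegative.

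One small technical caveat worth tightening: your comparison step invokes uniqueness of solutions to $\dot y=-\kappa(y)$, but an extended class-$\mathscr{K}$ function is only assumed continuous and strictly increasing, not locally Lipschitz, so uniqueness of the comparison ODE is not automatic. You can sidestep this either by noting that the paper immediately specializes to $\kappa(r)=\gamma r$ (which is Lipschitz), or by replacing the comparison-ODE argument with the direct Nagumo-type observation that on the boundary $\{h^s=0\}$ the constraint forces $\dot h^s\geq -\kappa(0)=0$, which is exactly the tangent-cone condition for forward invariance and needs no uniqueness of the comparison system.
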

As described in (\cite{wang2017safety}), the extended class-$\mathscr{K}$ function $\kappa$ such as $\kappa(r)=r^P$ with any positive odd integer $P$ leads to different behaviors of the state of the system approaching the boundary of safety set $\mathcal{H}^s$ in (\ref{eq:safeset_agg}). Similar to (\cite{wang2017safety}), in this paper we use the particular choice of $\kappa(h^s(\mathbf{x}))=\gamma h^s(\mathbf{x})$ with $\gamma>0$. In order to render a larger admissible control space $S(x)$, a very large value of $\gamma>>0$ will be adopted.
Thus the admissible control space in (\ref{eq:sx_general}) induces the following pairwise constraints over the controllers, referred as SBC (\cite{wang2017safety}):
\begin{equation}\label{eq:safebarrier}\footnotesize
\begin{split}
\mathcal{B}^s(\mathbf{x})&=\{\mathbf{u}\in \mathbb{R}^{m N}:\dot{h}^s_{i,j}(\mathbf{x},\mathbf{u})+\gamma h^s_{i,j}(\mathbf{x})\geq 0, \;\forall i>j\} \\
\mathcal{B}^o(\mathbf{x},\mathbf{x}_o)&=\{\mathbf{u}\in \mathbb{R}^{m N}:\dot{h}^s_{i,k}(\mathbf{x},\mathbf{x}_o,\mathbf{u},\mathbf{u}^o)+\gamma h^s_{i,k}(\mathbf{x},\mathbf{x}_o)\geq 0, \;\forall i,k\}
\end{split}
\end{equation}
where $\mathbf{u}^o\in\mathbb{R}^{dK}$ is the joint control input
%space of joint control inputs 
of all the obstacles not controllable by the robots.
Here $\mathcal{B}^s(\mathbf{x}),\mathcal{B}^o(\mathbf{x},\mathbf{x}_o)$ define the SBC for the inter-robot and robot-obstacle collision avoidance respectively, rendering the safety set $\mathcal{H}^s$ forward invariant: the robots will always stay safe, i.e. satisfying (\ref{eq:safesetij}) at all times if they are initially collision free and the robots' joint control input $\mathbf{u}$ lies in the set $\mathcal{B}^s(\mathbf{x})\cap\mathcal{B}^o(\mathbf{x},\mathbf{x}_o)$. 
%Note that at any time point $t$ with known current robot states $\mathbf{x}(t)$, 
One of the useful properties of (\ref{eq:safebarrier}) is that they induce linear constraints over both the pair-wise control inputs $\mathbf{u}_i$ and $\mathbf{u}_j$ (inter-robot) and control input $\mathbf{u}_i$ (robot-obstacle). 

\section{Probabilistic Safety Barrier Certificates}
\label{sec:PrSBC}

\subsection{Probabilistic Safety Barrier Certificates (PrSBC)}

We seek a probabilistic version of Lemma \ref{lemma:forward} that implies the SBC in (\ref{eq:safebarrier}) as a sufficient condition for the forward invariance of $\mathcal{H}^s$ in (\ref{eq:safeset_agg}). Given the assumption that each pairwise robots are initially collision-free, i.e. $\mathbf{x}_i, \mathbf{x}_j\in \mathcal{H}_{i,j}^s$ at $t=0$ and the sufficiency condition in Lemma \ref{lemma:forward}, we have $\mathbf{u}_i,\mathbf{u}_j\in \mathcal{B}_{i,j}^s(\mathbf{x})\implies \mathbf{x}_i,\mathbf{x}_j\in \mathcal{H}_{i,j}^s$ and $\mathbf{u}_i,\mathbf{u}_j\notin \mathcal{B}_{i,j}^s(\mathbf{x})\notimplies \mathbf{x}_i,\mathbf{x}_j\notin \mathcal{H}_{i,j}^s$. %\ws{the second part is not true?}. 
Hence
%Since Lemma \ref{lemma:forward} is a sufficiency condition, 
it is straightforward
%\wl{show why it is straightforward}
to show that $\text{Pr}(\mathbf{u}_i,\mathbf{u}_j\in \mathcal{B}_{i,j}^s(\mathbf{x}))\leq\text{Pr}(\mathbf{x}_i,\mathbf{x}_j\in \mathcal{H}_{i,j}^s)$ and $\text{Pr}(\mathbf{u}_i,\mathbf{u}_k\in \mathcal{B}_{i,k}^o(\mathbf{x},\mathbf{x}_o))\leq\text{Pr}(\mathbf{x}_i,\mathbf{x}_k\in \mathcal{H}_{i,k}^s)$. Consequently, we can derive the following inter-robot and robot-obstacle probabilistic collision free sufficiency conditions corresponding to equ. (\ref{eq:prob_safe}):
\begin{equation}\footnotesize\label{eq:prob_safe_bc}
    \begin{split}
       \text{Pr}(\mathbf{u}_i,\mathbf{u}_j\in \mathcal{B}_{i,j}^s(\mathbf{x}))&\geq \sigma \implies    \text{Pr}(\mathbf{x}_i,\mathbf{x}_j\in \mathcal{H}_{i,j}^s)\geq \sigma,\quad \forall i> j \\
       \text{Pr}(\mathbf{u}_i,\mathbf{u}_k\in \mathcal{B}_{i,k}^o(\mathbf{x}, \mathbf{x}_o))&\geq \sigma_o \implies
       \text{Pr}(\mathbf{x}_i,\mathbf{x}_k\in \mathcal{H}_{i,k}^s)\geq \sigma_o,\quad \forall i,\;k
\end{split}
\end{equation}

Given these reformulated collision-free chance constraints over controllers, we now   formally define the Probabilistic Safety Barrier Certificates (PrSBC):
\begin{definition}
\textbf{Probabilistic Safety Barrier Certificates (PrSBC):} Given a confidence level $\sigma \in[0,1]$, PrSBC determines the admissible control space $\mathcal{S}_u^\sigma$ at each time-step guaranteeing the chance-constrained safety condition in equ. (\ref{eq:prob_safe}) and are defined as the intersection of $n$ different half-spaces where $n$ is the total number of pairwise deterministic inter-robot constraints.
\begin{equation}\label{eq:def_PrSBC}\footnotesize
    \mathcal{S}_u^\sigma = \{\mathbf{u}\in\mathbb{R}^{mN}\;|\; A_{ij}^\sigma\mathbf{u}\leq b^\sigma_{ij},\quad \forall i>j, \;A^\sigma\in \mathbb{R}^{n\times mN}, \;b^\sigma\in \mathbb{R}^n\}
\end{equation}
\end{definition}
Here we first introduce the definition and form of PrSBC. The computation of $A^\sigma\in \mathbb{R}^{n\times mN}, b^\sigma\in \mathbb{R}^n$ determined by $\sigma$ will be given in the latter part of equ. (\ref{eq:PrSBC_exact_sol}) and (\ref{eq:PrSBC_obs}) for inter-robot and robot-obstacle collision avoidance. The PrSBC hence characterizes the admissible safe control space for the multi-robot team with probabilistic safety guarantee.

\begin{theorem}\label{theorm: existence of PrSBC}
\textbf{Existence of PrSBC:} Assuming all pairwise robots are initially collision-free at $t=0$, i.e. equ. (\ref{eq:safe_h}) holds true for all possible value of random state variables $\mathbf{x}_i\in [\hat{\mathbf{x}}_i-\Delta \mathbf{v}_i, \hat{\mathbf{x}}_i+\Delta \mathbf{v}_i], \forall i\in\mathcal{I}$, then the PrSBC defined in equ. (\ref{eq:def_PrSBC}) is guaranteed to exist for any given confidence level $\sigma \in [0,1]$.
\end{theorem}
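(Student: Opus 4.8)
The plan is to prove existence by exhibiting a single control that lies in every one of the $n$ half-spaces at once. Since $\mathcal{S}_u^\sigma = \bigcap_{i>j}\{\mathbf{u}:A_{ij}^\sigma\mathbf{u}\le b_{ij}^\sigma\}$ is a finite intersection of half-spaces, non-emptiness follows immediately once one common feasible control is produced, and the natural candidate is the zero control $\mathbf{u}=\mathbf{0}$. Substituting $\mathbf{u}=\mathbf{0}$ into each constraint reduces the entire claim to verifying $b_{ij}^\sigma\ge 0$ for every pair $i>j$ and every $\sigma\in[0,1]$, because $A_{ij}^\sigma\mathbf{0}=0$. The appeal of this candidate is that joint feasibility across all pairs comes for free: the same point $\mathbf{0}$ witnesses every half-space, so no compatibility argument among the $n$ coupled pairwise constraints is needed.

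Before that, I would record a monotonicity reduction. Raising the confidence level $\sigma$ only tightens the chance constraint $\mathrm{Pr}(\mathbf{u}_i,\mathbf{u}_j\in\mathcal{B}_{i,j}^s(\mathbf{x}))\ge\sigma$, so $\mathcal{S}_u^{\sigma}\supseteq\mathcal{S}_u^{1}$ for all $\sigma\le 1$; hence it suffices to treat the worst case $\sigma=1$, where the chance constraint collapses to the robust requirement that $\dot h_{i,j}^s(\mathbf{x},\mathbf{u})+\gamma h_{i,j}^s(\mathbf{x})\ge 0$ hold for every realization of the measurement noise $\mathbf{v}$ and process noise $\mathbf{w}$ inside their finite supports. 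This framing isolates the role of the hypothesis: the assumption guarantees $h_{i,j}^s(\mathbf{x})\ge 0$ for all admissible $\mathbf{x}_i\in[\hat{\mathbf{x}}_i-\Delta\mathbf{v}_i,\hat{\mathbf{x}}_i+\Delta\mathbf{v}_i]$, i.e. robust initial safety, and it is exactly this robust non-negativity of $h_{i,j}^s$ that will feed the lower bound on $b_{ij}^\sigma$.

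The core computation is then to expand the barrier constraint at $\mathbf{u}=\mathbf{0}$ using $h_{i,j}^s=\norm{\mathbf{x}_i-\mathbf{x}_j}^2-(R_i+R_j)^2$ and the control-affine dynamics of (\ref{eq:dynamics}). Differentiating gives $\dot h_{i,j}^s=2(\mathbf{x}_i-\mathbf{x}_j)^\top(\dot{\mathbf{x}}_i-\dot{\mathbf{x}}_j)$, whose $\mathbf{u}=\mathbf{0}$ value $2(\mathbf{x}_i-\mathbf{x}_j)^\top(F_i(\mathbf{x}_i)-F_j(\mathbf{x}_j)+\mathbf{w}_i-\mathbf{w}_j)$ is a bounded quantity, since the positions range over compact support boxes, $F_i,F_j$ are locally Lipschitz, and $\mathbf{w}_i,\mathbf{w}_j$ have finite support. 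Thus the constant term in the reformulated constraint is $\gamma h_{i,j}^s(\mathbf{x})$ plus this bounded drift-and-noise contribution; because $\gamma h_{i,j}^s\ge 0$ robustly and the class-$\mathscr{K}$ gain $\gamma$ may be taken as large as the method prescribes, the expression is non-negative for every realization staying a fixed margin away from the safety boundary. I would package this into the closed form of $b_{ij}^\sigma$ (cf. (\ref{eq:PrSBC_exact_sol})) to conclude $b_{ij}^\sigma\ge 0$, hence $\mathbf{0}\in\mathcal{S}_u^\sigma$ and $\mathcal{S}_u^\sigma\neq\varnothing$.

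The step I expect to be the main obstacle is making $b_{ij}^\sigma\ge 0$ fully rigorous rather than heuristic: this requires pinning down the closed-form reformulation of the random affine constraint into $A_{ij}^\sigma\mathbf{u}\le b_{ij}^\sigma$ and then controlling the worst-case drift-and-noise term against $\gamma h_{i,j}^s$. The delicate regime is realizations near the safety boundary, where $h_{i,j}^s$ is close to zero so that $\gamma h_{i,j}^s$ no longer dominates and $\mathbf{u}=\mathbf{0}$ may violate the robust ($\sigma=1$) constraint. To close this gap I would either invoke that such boundary realizations carry probability zero for the continuous noise when $\sigma<1$, or, since the PrSBC set (\ref{eq:def_PrSBC}) imposes no magnitude bound on $\mathbf{u}$, replace the candidate by a finite separating control $\mathbf{u}_i,\mathbf{u}_j$ that drives $\dot h_{i,j}^s$ upward along the separation direction, using the control authority encoded in $G_i,G_j$.
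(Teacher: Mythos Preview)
Your candidate $\mathbf{u}=\mathbf{0}$ is the wrong witness, and the difficulty you flag in the last paragraph is not a technicality to be patched but the actual content of the proof. With $\mathbf{u}=\mathbf{0}$ the barrier inequality reads
\[
\gamma\,h_{i,j}^s(\mathbf{x})\;+\;2(\mathbf{x}_i-\mathbf{x}_j)^\top\bigl(F_i(\mathbf{x}_i)-F_j(\mathbf{x}_j)+\mathbf{w}_i-\mathbf{w}_j\bigr)\;\ge\;0,
\]
and nothing in the hypotheses forces the drift-plus-noise term to be non-negative. Your escape hatch ``$\gamma$ may be taken as large as the method prescribes'' is not available: $\gamma>0$ is a \emph{fixed} design constant of the barrier construction, not a parameter you may enlarge inside the existence proof, and the theorem must hold for whatever $\gamma$ was chosen. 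The boundary regime $h_{i,j}^s\approx 0$ is therefore not a measure-zero nuisance for $\sigma<1$; it is a configuration the hypothesis explicitly permits and at which $\mathbf{u}=\mathbf{0}$ can genuinely violate the constraint for every realization of the noise.

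The paper's argument avoids this obstacle by a different choice of witness. It rewrites the event inside $\mathcal{B}_{i,j}^s$ as
\[
\Bigl[\Delta\mathbf{x}_{i,j}+\tfrac{1}{\gamma}(\dot{\mathbf{x}}_i-\dot{\mathbf{x}}_j)\Bigr]^2\;\ge\;R_{ij}^2+\Bigl[\tfrac{1}{\gamma}(\dot{\mathbf{x}}_i-\dot{\mathbf{x}}_j)\Bigr]^2,
\]
and then picks a control $\mathbf{u}^0$ that annihilates the \emph{relative velocity} $\dot{\mathbf{x}}_i-\dot{\mathbf{x}}_j$, not the control itself. At that choice the event collapses to $\|\Delta\mathbf{x}_{i,j}\|^2\ge R_{ij}^2$, which is exactly the collision-free hypothesis and therefore holds with probability one over the full support---no domination by $\gamma$, no boundary issue. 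From probability one the paper then argues by continuity: as $\|\dot{\mathbf{x}}_i-\dot{\mathbf{x}}_j\|$ grows, the sphere defining $\Omega_{i,j}^{\mathbf{u}}$ slides off the support box and the probability decreases continuously from $1$ to $0$, so every level $\sigma\in[0,1]$ is hit and yields a separating hyperplane. Your fallback of ``a finite separating control that drives $\dot h_{i,j}^s$ upward'' is groping toward this, but it is not the same device; the clean move is to cancel $\dot{\mathbf{x}}_i-\dot{\mathbf{x}}_j$ so that the inequality becomes the hypothesis itself. Note also that the paper proceeds pairwise and only afterwards asserts the extension to all pairs, so your ``single common witness'' framing, while attractive, is not how the paper closes the argument.
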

\begin{proof}
The detailed proof is provided in the Appendix. The basic idea is to start by proving the existence of PrSBC between each pairwise robots $i$ and $j$ under any user-defined confidence level $\sigma\in[0,1]$. Consider $\dot{h}_{i,j}^s(\mathbf{x},\mathbf{u})=\frac{\partial h^s_{i,j}}{\partial \mathbf{x}}(\mathbf{x})(\Delta F_{i,j}(\mathbf{x})+ G_{i,j}(\mathbf{x})\mathbf{u}_{i,j}+\Delta \mathbf{w}_{i,j})$ with $\Delta F_{i,j}(\mathbf{x}) = F_i(\mathbf{x}_i)-F_j(\mathbf{x}_j),\; G_{i,j}(\mathbf{x})\mathbf{u}_{i,j} = G_i(\mathbf{x}_i)\mathbf{u}_i-G_j(\mathbf{x}_j)\mathbf{u}_j$, and 
$\Delta \mathbf{w}_{i,j}=\mathbf{w}_i-\mathbf{w}_j$.
We can re-write the sufficiency condition $\text{Pr}(\mathbf{u}_i,\mathbf{u}_j\in \mathcal{B}_{i,j}^s(\mathbf{x}))\geq \sigma$ in (\ref{eq:prob_safe_bc}) using (\ref{eq:safebarrier}) as follows:
\begin{align}\footnotesize
    \text{Pr}&(\mathbf{u}_i,\mathbf{u}_j\in \mathcal{B}_{i,j}^s(\mathbf{x}))\geq \sigma:\nonumber \\
    &\iff \text{Pr}\bigg(
    \frac{\partial h^s_{i,j}}{\partial \mathbf{x}}(\mathbf{x}) G_{i,j}(\mathbf{x})\mathbf{u}_{i,j}\geq -\gamma h^s_{i,j}(\mathbf{x})- \frac{\partial h^s_{i,j}}{\partial \mathbf{x}}(\mathbf{x})\big(\Delta F_{i,j}(\mathbf{x})+\Delta \mathbf{w}_{i,j}\big)
    \bigg)\bm{\geq} \sigma\label{eq:org_ineq}
\end{align}
This is a chance constraint over pairwise controller $\mathbf{u}_i,\mathbf{u}_j$.
Note that $\mathbf{x}=\hat{\mathbf{x}}-\mathbf{v}\in\mathbb{R}^{d\times N}$ is random variable with finite support and $\hat{\mathbf{x}},\mathbf{v} \in\mathbb{R}^{d\times N}$ are the joint observation and joint measurement noise respectively. Since it is assumed all pairwise robots are initially collision-free at current time step $t=0$ and the robot locations are noisy but bounded with finite support, via Bayesian decomposition we are able to prove there always exists a solution of pairwise $\mathbf{u}_i,\mathbf{u}_j$ rendering $\text{Pr}(\mathbf{u}_i,\mathbf{u}_j\in \mathcal{B}_{i,j}^s(\mathbf{x}))= \sigma$ in (\ref{eq:org_ineq}) and the non-empty admissible control space about $\mathbf{u}_i,\mathbf{u}_j$ for $\text{Pr}(\mathbf{u}_i,\mathbf{u}_j\in \mathcal{B}_{i,j}^s(\mathbf{x}))\geq \sigma$.
It is then straightforward to extend to all pairwise inter-robot collision avoidance constraints and thus concludes the proof.
\end{proof}

\noindent \textbf{Computation of PrSBC: }
Lets consider inter-robot collision avoidance first. Given any confidence level $\sigma\in[0,1]$, the equivalent chance constraint of $\text{Pr}(\mathbf{u}_i,\mathbf{u}_j\in \mathcal{B}_{i,j}^s(\mathbf{x}))\geq \sigma$ in (\ref{eq:org_ineq}) can be transformed into a deterministic linear constraint over pairwise controllers $\mathbf{u}_i,\mathbf{u}_j$ in the form of (\ref{eq:def_PrSBC}). %While it is computationally intractable to get a closed form solution from (\ref{eq:suff_ineq}), 
We obtain an approximate solution by considering the condition on each individual dimension $\forall l=1,\ldots,d$.
To simplify the discussion, we assume $\sigma>0.5$ and denote $e_{i,j}^{l,1}=\Phi^{-1}(\sigma)$ and $e_{i,j}^{l,2}=\Phi^{-1}(1-\sigma)$ with $\Phi^{-1}(\cdot)$ as the inverse cumulative distribution function (CDF) of the random variable $\Delta \mathbf{x}^l_{i,j}=\mathbf{x}^l_i-\mathbf{x}^l_j$ 
in the $l$th dimension. 
We have $\sigma>0.5\implies e_{i,j}^{l,1}>e_{i,j}^{l,2}$. 
Thus, the sufficient condition for (\ref{eq:org_ineq}) formally becomes the following deterministic constraint (detailed deduction is provided in the Appendix): 
\begin{equation}\footnotesize\label{eq:PrSBC_element}
    \exists l=1,\ldots,d: \quad -2\mathbf{e}^l_{i,j}(G_i\mathbf{u}_i -G_j \mathbf{u}_j)_l/\gamma \leq (\mathbf{e}^l_{i,j})^2-R_{ij}^2+B_{ij}^l + 2\mathbf{e}^l_{i,j}\Delta F^l_{i,j}/\gamma
\end{equation}
where $R_{ij}=R_i+R_j$ and $B_{ij}^l$ is a constant determined by finite support of $\mathbf{w}_i,\mathbf{w}_j,\Delta \mathbf{x}_{i,j}$ at the $l$th dimension. 
To simplify the discussion we assume piece-wise $G_i,G_j\in\mathbb{R}^{d\times m},F_i,F_j\in\mathbb{R}^{d\times 1}$ in (1) 
%are known and deterministic. 
can be approximated by Taylor theorem at points $\hat{x}_i,\hat{x}_j$ respectively.
$(G_i\mathbf{u}_i -G_j \mathbf{u}_j)_l, \Delta F^l_{i,j}\in \mathbb{R}$ denote the $l$th element of $(G_i\mathbf{u}_i -G_j \mathbf{u}_j)\in \mathbb{R}^{d\times 1}$ and $\Delta F_{i,j}=F_i-F_j\in \mathbb{R}^{d\times 1}$ respectively. Also, we have
$\mathbf{e}_{i,j}^{l}=e_{i,j}^{l,2}\;\text{if}\; e_{i,j}^{l,2}>0, \text{or}\; e_{i,j}^{l,1}\;\text{if}\; e_{i,j}^{l,1}<0, \text{or}\; 0\;\text{if}\; e_{i,j}^{l,2}\leq 0 \;\text{and}\; e_{i,j}^{l,1}\geq 0$.
Note that $\mathbf{e}^l_{i,j}=0$ implies the two robots $i$ and $j$ overlap along the $l$th dimension, e.g. two drones flying to the same 2D locations but with different altitudes. As it is assumed any pairwise robots are initially collision free and from the forward invariance property discussed above, $\mathbf{e}^l_{i,j}=0$ only happens along at most $d-1$ dimensions. 
To that end, we can formally construct the PrSBC in (\ref{eq:def_PrSBC}) with the following linear deterministic constraints in closed form.
\begin{equation}\footnotesize\label{eq:PrSBC_exact_sol}
    \mathcal{S}_u^\sigma = \{\mathbf{u}\in\mathbb{R}^{mN}| -2\mathbf{e}^T_{i,j}(G_{i}\mathbf{u}_i - G_j\mathbf{u}_j)/\gamma \leq ||\mathbf{e}_{i,j}||^2-d\cdot R_{ij}^2+B_{ij} 
    + 2\mathbf{e}^T_{i,j}\Delta F_{i,j}/\gamma, 
    \quad \forall i>j\}
\end{equation}
where $\mathbf{e}_{i,j}=[\mathbf{e}_{i,j}^{1},\ldots,\mathbf{e}_{i,j}^{d}]^T\in \mathbb{R}^{d\times 1}$ and $B_{ij} = \Sigma_{l=1}^d B_{ij}^l$.
This invokes a set of pairwise linear constraints over the robot controllers such that the inter-robot probabilistic collision avoidance in (\ref{eq:prob_safe}) holds true at all times. Note the PrSBC constraint in (\ref{eq:PrSBC_exact_sol}) is a conservative approximation of (\ref{eq:PrSBC_element}) and therefore guarantee $\text{Pr}(\mathbf{u}_i,\mathbf{u}_j\in \mathcal{B}_{i,j}^s(\mathbf{x}))\geq \sigma$. Please see the detailed discussion in the Appendix.
\begin{remark}
For other forms of distribution than uniform but with finite support for the noise models, the only change is the computation of inverse CDF to specify different $e_{i,j}^{l,1},e_{i,j}^{l,2}$ and the rest of the derivations of PrSBC still holds and ensure chance-constrained safety. For Gaussian distribution with infinite support, we can still compute a finite support based on the corresponding inverse CDF from $\sigma$ for $\Delta \mathbf{x}_{i,j}$
at each dimension.
\end{remark}

\begin{proposition}
\noindent\textbf{PrSBC for Robot-Obstacle Collision Avoidance:} Consider the dynamic obstacle model described in Section. \ref{subsec: models} and PrSBC for pairwise robots in (\ref{eq:PrSBC_exact_sol}), the PrSBC for robot-obstacle collision avoidance with a given confidence level $\sigma_o\in[0,1]$ can be defined as follows.
\begin{equation}\footnotesize\label{eq:PrSBC_obs}
\begin{split}
    \mathcal{S}_u^{\sigma_o} =
    \{\mathbf{u}\in\mathbb{R}^{m N}| -2\mathbf{e}'^T_{i,k}G_i\mathbf{u}_i/\gamma \leq  -2\mathbf{e}'^T_{i,k}\hat{\mathbf{u}}_k/\gamma + ||\mathbf{e}'_{i,k}||^2-d\cdot R_{ik}^2+B_{ik} 
    + 2\mathbf{e}'^T_{i,k} F_i/\gamma
    , \forall i,k\}
\end{split}
\end{equation}
where the intermediate variables of $\mathbf{e}'_{i,k},B_{ik}$ are computed the same way as for inter-robot case (\ref{eq:PrSBC_exact_sol}). 
\end{proposition}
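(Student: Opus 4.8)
The plan is to prove the proposition by reduction to the inter-robot result already established in equation (\ref{eq:PrSBC_exact_sol}), treating each obstacle $k$ as a ``virtual robot'' carrying a prescribed, uncontrolled velocity. Concretely, I would model the obstacle dynamics $\dot{\mathbf{x}}_k = \hat{\mathbf{u}}_k + \mathbf{w}_k$ as the special case of the robot dynamics (\ref{eq:dynamics}) obtained by setting $F_k \equiv 0$, replacing the controllable term $G_k\mathbf{u}_k$ by the \emph{known} detected velocity $\hat{\mathbf{u}}_k$, and retaining the bounded process noise $\mathbf{w}_k \sim U(-\Delta\mathbf{w}_k,\Delta\mathbf{w}_k)$. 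Under this identification the robot-obstacle safety function $h^s_{i,k}(\mathbf{x},\mathbf{x}_o) = \norm{\mathbf{x}_i-\mathbf{x}_k}^2 - (R_i+R_k)^2$ has exactly the same quadratic form as $h^s_{i,j}$, so every structural ingredient of the pairwise derivation transfers.

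First I would differentiate along the trajectories, $\dot{h}^s_{i,k} = 2(\mathbf{x}_i-\mathbf{x}_k)^\top(\dot{\mathbf{x}}_i-\dot{\mathbf{x}}_k)$, and expand $\dot{\mathbf{x}}_i-\dot{\mathbf{x}}_k = F_i + G_i\mathbf{u}_i + \mathbf{w}_i - \hat{\mathbf{u}}_k - \mathbf{w}_k$. This matches the inter-robot expression from the proof of Theorem~\ref{theorm: existence of PrSBC} under the substitutions $\Delta F_{i,k} = F_i - 0 = F_i$, $G_j\mathbf{u}_j \mapsto \hat{\mathbf{u}}_k$, $\Delta\mathbf{w}_{i,k} = \mathbf{w}_i - \mathbf{w}_k$, $R_{ik} = R_i + R_k$, and $\Delta\mathbf{x}_{i,k} = \mathbf{x}_i - \mathbf{x}_k$. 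The crucial structural difference is that $\hat{\mathbf{u}}_k$ is a known constant rather than a decision variable, so the only optimization variable surviving on the left-hand side of the barrier constraint is the robot control $\mathbf{u}_i$.

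Next I would re-run the chance-constraint-to-deterministic reduction that produced (\ref{eq:PrSBC_element}) and (\ref{eq:PrSBC_exact_sol}), now for the random variable $\Delta\mathbf{x}_{i,k}$. The per-dimension thresholds $e_{i,k}^{\prime l,1} = \Phi^{-1}(\sigma_o)$ and $e_{i,k}^{\prime l,2} = \Phi^{-1}(1-\sigma_o)$ come from the inverse CDF of $\Delta\mathbf{x}^l_{i,k} = \mathbf{x}^l_i - \mathbf{x}^l_k$ (both positions being random with finite measurement support $\Delta\mathbf{v}_i,\Delta\mathbf{v}_k$), the selection rule defining $\mathbf{e}'_{i,k}$ is identical to the pairwise rule, and the constant $B_{ik} = \sum_{l=1}^d B_{ik}^l$ absorbs the finite supports of $\mathbf{w}_i,\mathbf{w}_k$ and $\Delta\mathbf{x}_{i,k}$ exactly as before. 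Collecting terms, the $G_i\mathbf{u}_i$ contribution remains on the left while the now-constant obstacle velocity $\hat{\mathbf{u}}_k$ and the drift $F_i$ move to the right, yielding precisely (\ref{eq:PrSBC_obs}). Existence for any $\sigma_o\in[0,1]$ then follows from the same Bayesian-decomposition argument as in Theorem~\ref{theorm: existence of PrSBC}, since the initial collision-free assumption and the finite support of $\Delta\mathbf{x}_{i,k}$ are unchanged.

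I expect the main obstacle to be bookkeeping rather than any new analytic difficulty: one must track which quantities are genuinely random (the process noises $\mathbf{w}_i,\mathbf{w}_k$ and the positions $\mathbf{x}_i = \hat{\mathbf{x}}_i - \mathbf{v}_i$, $\mathbf{x}_k = \hat{\mathbf{x}}_k - \mathbf{v}_k$) versus those known to the robot ($\hat{\mathbf{u}}_k$), and verify that dropping the controllable $G_j\mathbf{u}_j$ term in favor of a constant preserves both the half-space structure of (\ref{eq:def_PrSBC}) and the conservative approximation that guarantees $\text{Pr}(\mathbf{u}_i,\mathbf{u}_k\in\mathcal{B}^o_{i,k}(\mathbf{x},\mathbf{x}_o))\geq\sigma_o$. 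Because the reduction is exact, no step beyond this substitute-and-collect procedure is required.
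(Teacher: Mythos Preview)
Your proposal is correct and matches the paper's approach: the paper does not supply an explicit proof for this proposition but simply states that $\mathbf{e}'_{i,k}$ and $B_{ik}$ are ``computed the same way as for inter-robot case,'' i.e., the result is obtained by the exact substitute-and-collect reduction you describe (set $F_k\equiv 0$, replace the controlled term $G_j\mathbf{u}_j$ by the known constant $\hat{\mathbf{u}}_k$, and move it to the right-hand side). There is nothing further to add.
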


\begin{proposition}
PrSBC in (\ref{eq:PrSBC_exact_sol}) can be considered as a generalized SBC when the dynamics model in (\ref{eq:dynamics}) is deterministic and without any uncertainty, i.e. $\mathbf{w},\mathbf{v}=0$. In this case, we have 
$\mathbf{e}_{i,j}=\Delta \mathbf{x}_{i,j}=\Delta \hat{\mathbf{x}}_{i,j} = \hat{\mathbf{x}}_i - \hat{\mathbf{x}}_j$
and $B_{ij}=0$ in (\ref{eq:PrSBC_exact_sol}), and then it degenerates to the constraint in (\ref{eq:safebarrier}) same as SBC (\cite{wang2017safety}).
\end{proposition}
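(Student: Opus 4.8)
The plan is to take the zero-noise limit of the closed-form PrSBC constraint in (\ref{eq:PrSBC_exact_sol}) and show it collapses, term by term, onto the deterministic SBC half-space in (\ref{eq:safebarrier}). First I would set $\mathbf{v}=0$, so that the measurement model in (\ref{eq:dynamics}) forces $\hat{\mathbf{x}}_i=\mathbf{x}_i$ for every robot $i$; the position random variables then have zero support and degenerate to deterministic points, and consequently $\Delta\mathbf{x}_{i,j}=\mathbf{x}_i-\mathbf{x}_j=\hat{\mathbf{x}}_i-\hat{\mathbf{x}}_j=\Delta\hat{\mathbf{x}}_{i,j}$, which establishes the stated identity for the displacement.

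Next I would identify $\mathbf{e}_{i,j}$ with $\Delta\mathbf{x}_{i,j}$. Recall that $e^{l,1}_{i,j}=\Phi^{-1}(\sigma)$ and $e^{l,2}_{i,j}=\Phi^{-1}(1-\sigma)$ are quantiles of the inverse CDF of the scalar $\Delta\mathbf{x}^l_{i,j}$. When this variable is a point mass at its deterministic value, its CDF is the unit step at that value, so its inverse CDF is constant and returns $\Delta\mathbf{x}^l_{i,j}$ at every probability level; hence $e^{l,1}_{i,j}=e^{l,2}_{i,j}=\Delta\mathbf{x}^l_{i,j}$. Feeding this into the three-case definition of $\mathbf{e}^l_{i,j}$ returns $\mathbf{e}^l_{i,j}=\Delta\mathbf{x}^l_{i,j}$ in each branch, so stacking over $l$ gives $\mathbf{e}_{i,j}=\Delta\mathbf{x}_{i,j}$. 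In parallel I would argue $B_{ij}=0$: since each $B^l_{ij}$ is assembled only from the finite supports of $\mathbf{w}_i,\mathbf{w}_j,\Delta\mathbf{x}_{i,j}$, and all of these supports collapse to singletons once $\mathbf{w}=0$ and the positions are deterministic, every $B^l_{ij}$ vanishes and $B_{ij}=\sum_{l=1}^d B^l_{ij}=0$.

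With these substitutions in hand, the final step is to expand the SBC condition $\dot{h}^s_{i,j}(\mathbf{x},\mathbf{u})+\gamma h^s_{i,j}(\mathbf{x})\geq 0$ directly. Differentiating $h^s_{i,j}$ in (\ref{eq:safe_h}) along the noise-free dynamics yields $\dot{h}^s_{i,j}=2\Delta\mathbf{x}^T_{i,j}(\Delta F_{i,j}+G_i\mathbf{u}_i-G_j\mathbf{u}_j)$, so the SBC half-space rearranges, upon dividing by $-\gamma$ and flipping the inequality, into $-2\Delta\mathbf{x}^T_{i,j}(G_i\mathbf{u}_i-G_j\mathbf{u}_j)/\gamma\leq\|\Delta\mathbf{x}_{i,j}\|^2-R^2_{ij}+2\Delta\mathbf{x}^T_{i,j}\Delta F_{i,j}/\gamma$, which is exactly the body of (\ref{eq:PrSBC_exact_sol}) once $\mathbf{e}_{i,j}=\Delta\mathbf{x}_{i,j}$ and $B_{ij}=0$ are inserted.

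The step I expect to be the main obstacle is reconciling the radius term: the summed form (\ref{eq:PrSBC_exact_sol}) carries $d\cdot R^2_{ij}$ because the per-dimension inequality (\ref{eq:PrSBC_element}) was aggregated over all $d$ coordinates, whereas the exact SBC constraint carries only $R^2_{ij}$. For $d=1$ the two coincide verbatim; for $d>1$ I would observe that the extra $(d-1)R^2_{ij}$ only shrinks the right-hand side, so the degenerate PrSBC is a conservative, radius-inflated tightening of SBC whose feasible set is contained in that of SBC. Any controller admissible under the degenerate PrSBC therefore still satisfies the deterministic barrier condition and, by Lemma~\ref{lemma:forward}, renders $\mathcal{H}^s$ forward invariant. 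Making this reduction precise, namely clarifying that the claim is exact equality for $d=1$ and a sound conservative specialization otherwise, is the one point that requires care, while the remaining quantile and support computations are routine bookkeeping.
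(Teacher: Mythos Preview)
The paper does not supply a proof for this proposition; it is simply asserted. Your argument is therefore being compared against a bare statement, and on that count your derivation is more careful than the paper itself.

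Your three reductions---$\mathbf{e}_{i,j}\to\Delta\mathbf{x}_{i,j}$ via the collapse of the inverse CDF to a point mass, $B_{ij}\to 0$ via the vanishing process-noise support (the appendix makes this explicit: $B^l_{i,j}=-\tfrac{2}{\gamma}\max\|\Delta\mathbf{w}^l_{i,j}\|\cdot\|\Delta\mathbf{x}^l_{i,j}\|$, which is zero once $\mathbf{w}=0$), and the term-by-term match of the rearranged SBC half-space---are all sound.

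You are also right to flag the radius term. Equation~(\ref{eq:PrSBC_exact_sol}) is obtained by summing the per-coordinate inequalities~(\ref{eq:PrSBC_element}), each of which carries its own $R_{ij}^2$, so the aggregate carries $d\cdot R_{ij}^2$ rather than $R_{ij}^2$. The proposition as stated in the paper claims exact degeneration to SBC, but that is literally true only for $d=1$; for $d>1$ the zero-noise PrSBC is a strict (conservative) sub-half-space of the SBC constraint, exactly as you describe. Your resolution---treating it as a sound conservative specialization whose feasible set is contained in $\mathcal{B}^s_{i,j}(\mathbf{x})$, so Lemma~\ref{lemma:forward} still applies---is the honest way to salvage the claim. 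This is not a gap in your argument; it is a gap in the paper's proposition that you have correctly diagnosed and worked around.
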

\begin{proposition}
\noindent\textbf{Worst-case Collision Avoidance:} when confidence level is set to be $\sigma =1$, the PrSBC in (\ref{eq:PrSBC_exact_sol}) hence leads to the worst-case driven collision avoidance with $\mathbf{e}_{i,j}$ specified by the
boundary of finite support of $\Delta \mathbf{x}_{i,j}$, yielding most conservative motions of $\mathbf{u}$ for all of the robots.
\end{proposition}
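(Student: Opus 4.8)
The plan is to specialize the already-established closed-form constraint (\ref{eq:PrSBC_exact_sol}) to the limiting case $\sigma=1$ and read off its geometric meaning, treating the derivation leading to (\ref{eq:PrSBC_element}) and (\ref{eq:PrSBC_exact_sol}) as given. First I would evaluate the endpoints $e_{i,j}^{l,1}=\Phi^{-1}(\sigma)$ and $e_{i,j}^{l,2}=\Phi^{-1}(1-\sigma)$ at $\sigma=1$. Since $\Phi$ is the CDF of the component $\Delta\mathbf{x}^l_{i,j}=\mathbf{x}^l_i-\mathbf{x}^l_j$, whose distribution has finite support by assumption, $\Phi^{-1}(1)$ returns the upper endpoint (supremum) and $\Phi^{-1}(0)$ the lower endpoint (infimum) of that support. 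Hence at $\sigma=1$ both $e_{i,j}^{l,1}$ and $e_{i,j}^{l,2}$ are precisely the boundary values of the finite support of $\Delta\mathbf{x}^l_{i,j}$, which establishes the second claim of the statement. Note also that $\sigma=1>0.5$ and $\Phi^{-1}(1)\geq\Phi^{-1}(0)$, so the standing assumptions under which (\ref{eq:PrSBC_element}) was derived remain in force.

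Next I would interpret the piecewise selection rule for $\mathbf{e}^l_{i,j}$. The rule picks $e_{i,j}^{l,2}$ when the whole support lies above zero, $e_{i,j}^{l,1}$ when it lies below zero, and $0$ when the support straddles zero. In every case this is exactly the element of the closed interval $[\Phi^{-1}(0),\Phi^{-1}(1)]$ with smallest absolute value, i.e. the realization of $\Delta\mathbf{x}^l_{i,j}$ that minimizes $(\Delta\mathbf{x}^l_{i,j})^2$ over the support along dimension $l$. Because $\norm{\mathbf{x}_i-\mathbf{x}_j}^2=\sum_{l=1}^d (\Delta\mathbf{x}^l_{i,j})^2$ separates across dimensions, minimizing each coordinate independently yields the configuration that minimizes the inter-robot separation, i.e. the worst, closest-to-collision admissible realization. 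This is what identifies $\mathbf{e}_{i,j}$ as the worst-case separation vector.

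I would then tie this back to the chance constraint (\ref{eq:org_ineq}). At $\sigma=1$ the requirement $\text{Pr}(\mathbf{u}_i,\mathbf{u}_j\in\mathcal{B}^s_{i,j}(\mathbf{x}))\geq 1$ forces the barrier inequality to hold almost surely, hence for every realization of the bounded variables $\mathbf{v}_i,\mathbf{v}_j,\mathbf{w}_i,\mathbf{w}_j$ in their supports. With $\mathbf{e}_{i,j}$ fixed at the worst-case separation and $B_{ij}=\sum_{l=1}^d B_{ij}^l$ evaluated at the extremal process-noise values, constraint (\ref{eq:PrSBC_exact_sol}) collapses to a single deterministic robust inequality that must hold against the least favorable configuration of states and disturbances; this is the worst-case collision-avoidance certificate, proving the first claim. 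Finally, for the ``most conservative'' conclusion I would argue monotonicity: since $\Phi^{-1}(\sigma)$ is nondecreasing in $\sigma$, increasing $\sigma$ toward $1$ only enlarges the magnitude of the bounding terms in $b^\sigma_{ij}$, so each half-space $A^\sigma_{ij}\mathbf{u}\leq b^\sigma_{ij}$ tightens and $\mathcal{S}_u^\sigma$ shrinks monotonically, attaining its smallest and most restrictive value at $\sigma=1$.

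The main obstacle I anticipate is the second step: rigorously justifying that coordinate-wise minimization of each $(\Delta\mathbf{x}^l_{i,j})^2$ coincides with the true worst case of the joint constraint, and that the constant $B_{ij}$ genuinely captures the extremal contribution of $\Delta\mathbf{w}_{i,j}$ rather than merely an approximation of it. Since (\ref{eq:PrSBC_exact_sol}) is itself described as a conservative per-dimension relaxation of (\ref{eq:PrSBC_element}), I would need to confirm that at $\sigma=1$ this relaxation remains sound, i.e. that the resulting set is contained in the exact worst-case safe set, so that the forward-invariant safety of Lemma \ref{lemma:forward} is genuinely guaranteed for all realizations and not merely approximately enforced.
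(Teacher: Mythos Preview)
The paper does not supply a proof for this proposition; it is stated as a direct observation following from the construction of $\mathbf{e}_{i,j}$ via the inverse CDF, with no accompanying argument. Your proposal is therefore not comparable to any proof in the paper---it is a correct elaboration of why the observation holds. Your first two steps (that $\Phi^{-1}(1)$ and $\Phi^{-1}(0)$ return the support endpoints, and that the piecewise rule then selects the support element of least absolute value, i.e.\ the closest-to-collision realization) are exactly the intended reading, and already suffice for what the proposition asserts.

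One small correction to your monotonicity sketch: as $\sigma\uparrow 1$, the selected $|\mathbf{e}^l_{i,j}|$ \emph{decreases} (the quantile moves toward the near endpoint of the support, hence toward zero), so it is the shrinking of $\|\mathbf{e}_{i,j}\|^2$ on the right-hand side of (\ref{eq:PrSBC_exact_sol}) that drives $b^\sigma_{ij}$ down and tightens the half-space, not an enlargement of bounding-term magnitudes. Your conclusion that $\mathcal{S}_u^\sigma$ is smallest at $\sigma=1$ is correct, but the mechanism is the opposite sign from what you wrote. The concerns you flag about whether the per-dimension relaxation and the treatment of $B_{ij}$ genuinely recover the exact worst case are legitimate; the paper does not address them either, since (\ref{eq:PrSBC_exact_sol}) is explicitly presented as a conservative approximation of (\ref{eq:PrSBC_element}) rather than a tight certificate.
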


\subsection{Optimization-based Controllers with Probabilistic Safety Barrier Certificates}
The constrained control space specified by PrSBC in (\ref{eq:PrSBC_exact_sol}) and (\ref{eq:PrSBC_obs}) ensures the forward invariance of probabilistic safety in (\ref{eq:prob_safe_bc}). Hence, we can reformulate the original QP problem in (\ref{eq:rawobj}) with the PrSBC constraints as follows to obtain the probabilistic safety controller. 
\vspace{-0.2cm}
\begin{align}\footnotesize\label{eq:final_obj}
 &\mathbf{u} = \argmin_{\mathbf{u}\in\mathbb{R}^{mN}} \sum_{i=1}^{N}\norm{\mathbf{u}_i-\mathbf{u}^*_i}^2\;\;\text{subject to}\;\mathbf{u}\in \mathcal{S}^\sigma_{\mathbf{u}}\bigcap \mathcal{S}^{\sigma_o}_{\mathbf{u}},\quad \norm{\mathbf{u}_i}\leq \alpha_i,\forall i=1,\ldots,N 
\end{align} 
Note that PrSBC constraints invoke a set of linear constraints over controllers and hence the probabilistic safety controller (\ref{eq:final_obj}) can be solved in real-time with guaranteed  probability of safety. The resulting safe controller per time step ensures for all $t \in [0, \tau]$, $\mathbf{u}\in \mathcal{S}^\sigma_{\mathbf{u}}\bigcap \mathcal{S}^{\sigma_o}_{\mathbf{u}}$, then our approach guarantees chance constrained safety along the entire time horizon $[0, \tau]$. 

\begin{remark}
(Probability of collision for the full trajectory)
Denoting $n_t$ as the total number of time steps during execution, the probability of collision avoidance between robot $i,j$ for the whole trajectory is lower bounded as 
$\text{Pr}\big(\bigcap\limits_{t=1}^{n_t}(\mathbf{x}^t_i,\mathbf{x}^t_j\in \mathcal{H}_{i,j}^s(t))\big) = {\prod_{t=1}^{n_t}}\text{Pr}
(\mathbf{x}^t_i,\mathbf{x}^t_j\in \mathcal{H}_{i,j}^s(t))\geq \sigma^{n_t}$. Here we assume the probability of collision avoidance at each time step is independent for practical purposes as done in \cite{zhu2019chance, van2011lqg}.
In theory, by selecting $\sigma = \text{exp}(\frac{\text{ln}\sigma_{all}}{n_t})$ one could achieve a lower bounded joint collision free threshold of $\sigma_{all}$ for the full trajectory. However, it could be over-conservative in the long run, and hence 
we use step-wise threshold to construct local collision constraints. 
An alternative 
is to impose discounting factor $\beta<1$ so that the penalty of future violation probabilities is relaxed, i.e. step-wise threshold $\sigma$ renders the same bounded joint threshold for the whole trajectory $\sum_{t=1}^{n_t}(\beta)^t\text{Pr}
(\mathbf{x}^t_i,\mathbf{x}^t_j\in \mathcal{H}_{i,j}^s(t))\geq \sigma$ if given discounting factor $\beta>0.5$ (see \cite{zhu2019chance}). 
\end{remark}

\subsection{Decentralized Probabilistic Safety Controller}
While the controller (\ref{eq:final_obj}) is centralized, we can also derive a decentralized version of the PrSBC and the controllers. The mechanism is similar to \cite{wang2017safety} which was originally applied to deterministic SBC.

Consider the PrSBC in equ. (\ref{eq:PrSBC_exact_sol}) and denote $b_{ij}^\sigma=||\mathbf{e}_{i,j}||^2-d\cdot R_{ij}^2+B_{ij}+ 2\mathbf{e}^T_{i,j}\Delta F_{i,j}/\gamma$. We can separate the linear pairwise PrSBC constraint between robot $i$ and $j$ in the following two inequalities:
\begin{equation}\label{eq:PrSBC_dec_constraint}\footnotesize
    -2\mathbf{e}^T_{i,j}G_i/\gamma\cdot\mathbf{u}_i\leq p_{ij}/(p_{ij}+p_{ji})\cdot b^\sigma_{ij},\quad 2\mathbf{e}^T_{i,j}G_j/\gamma\cdot \mathbf{u}_j\leq p_{ji}/(p_{ij}+p_{ji})\cdot b^\sigma_{ij}.
\end{equation}
Here $p_{ij},p_{ji}\in[0,1]$ represents the responsibility that each of the two robot takes regarding satisfying this pairwise probabilistic safety constraint. The knowledge of $p_{ij},p_{ji}$ can be either predefined and assumed known by all robots, in which case each robot does not need to communicate and simply avoid collision in a reciprocal manner, or can be communicated locally between pairwise robots in a more cooperative manner. Note that equ. (\ref{eq:PrSBC_dec_constraint}) is a sufficient condition of equ. (\ref{eq:PrSBC_exact_sol}) and hence still guarantees the required probabilistic safety. 

With such decentralized constraints, we have the decentralized probabilistic safety controller: 
{\footnotesize
\begin{align}\label{eq:final_obj_dec}
 \mathbf{u}_i = \argmin_{\mathbf{u}_i\in\mathbb{R}^{m}} \norm{\mathbf{u}_i-\mathbf{u}^*_i}^2 \;
 \text{subject to} \quad \mathbf{u}_i\in \mathcal{S}^\sigma_{\mathbf{u}_i}\bigcap \mathcal{S}^{\sigma_o}_{\mathbf{u}_i},\quad \norm{\mathbf{u}_i}\leq \alpha_i %\label{eq:full_constraint_dec}
\end{align} 
}%
with $\mathcal{S}_{\mathbf{u}_i}^\sigma=\{\mathbf{u}_i\in\mathbb{R}^m|-2\mathbf{e}^T_{i,j}G_i/\gamma\cdot\mathbf{u}_i\leq p_{ij}/(p_{ij}+p_{ji})\cdot b^\sigma_{ij},\forall j\in \mathcal{N}_i\}$ and $\mathcal{S}_{\mathbf{u}_i}^{\sigma_o}=\{\mathbf{u}_i\in\mathbb{R}^m|-2\mathbf{e}'^T_{i,k}G_i\mathbf{u}_i/\gamma \leq  -2\mathbf{e}'^T_{i,k}\hat{\mathbf{u}}_k/\gamma + ||\mathbf{e}'_{i,k}||^2-R_{ik}^2+B_{ik}+2\mathbf{e}'^T_{i,k} F_i/\gamma, \forall k\in\mathcal{K}\}$. $\mathcal{N}_i$ denotes the set of neighboring robots around robot $i$.

This decentralized PrSBC controller does not require centralized optimization process as for (\ref{eq:final_obj}), but may thus lead to more conservative motion of robots or infeasible solution in extreme cases. In this case the robots will simply decelerate to zero velocities to ensure safety, which may cause the deadlock preventing the robots from achieving the goals. Some deconfliction policies for deterministic SBC can thereby be employed as in \cite{celi2019deconfliction}. Readers are referred to \cite{celi2019deconfliction} for detailed solutions. \vspace{-0.3cm}

\begin{figure*}[!ht]
%\captionsetup{skip=0pt}
  \centering
  \begin{subfigure}{0.24\textwidth}
\includegraphics[width=\textwidth]{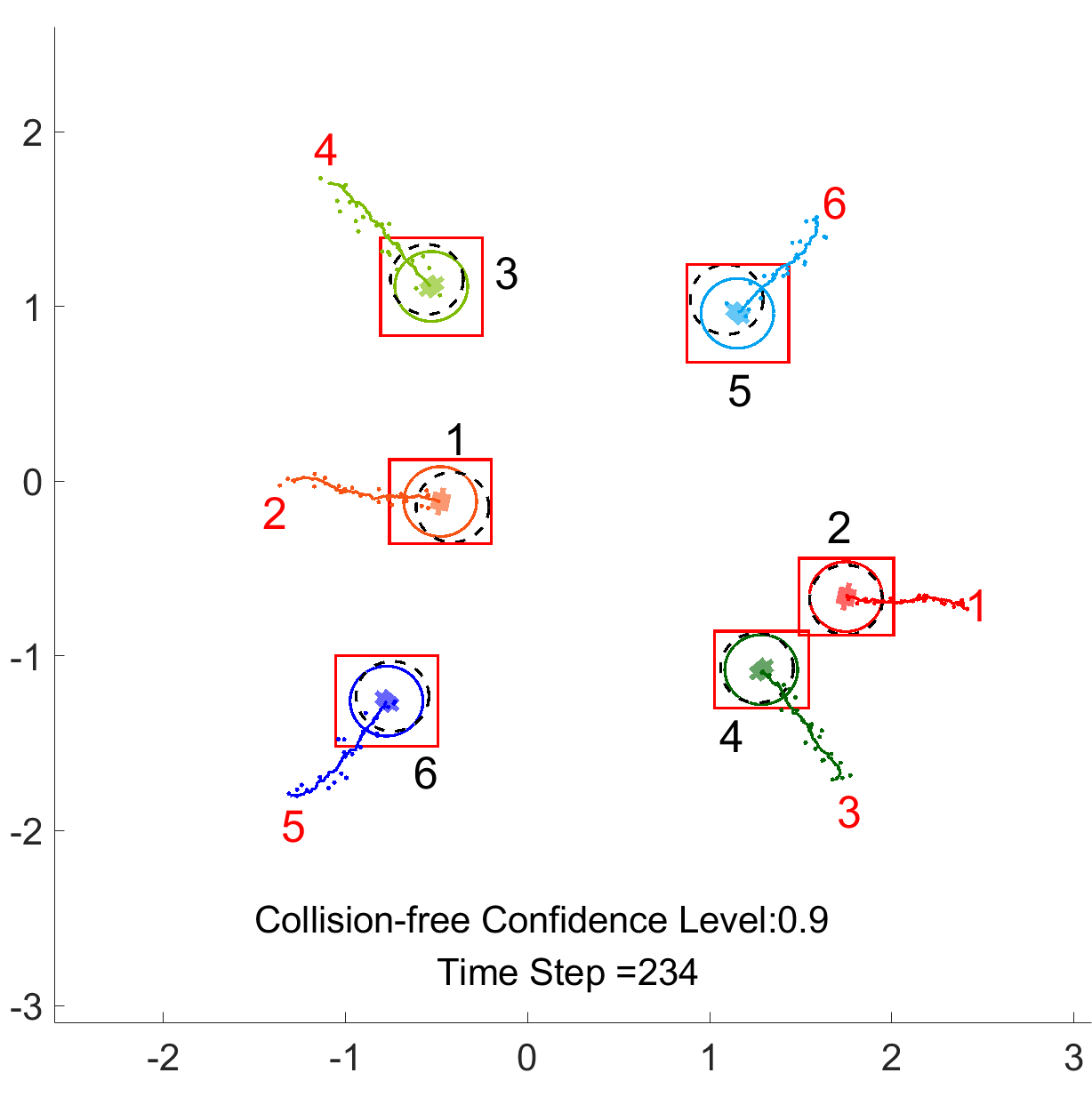}
    \caption{Time Step $=234$ (PrSBC)}
    \label{fig:init_prsbc}
  \end{subfigure}
  \begin{subfigure}{0.24\textwidth}
\includegraphics[width=\textwidth]{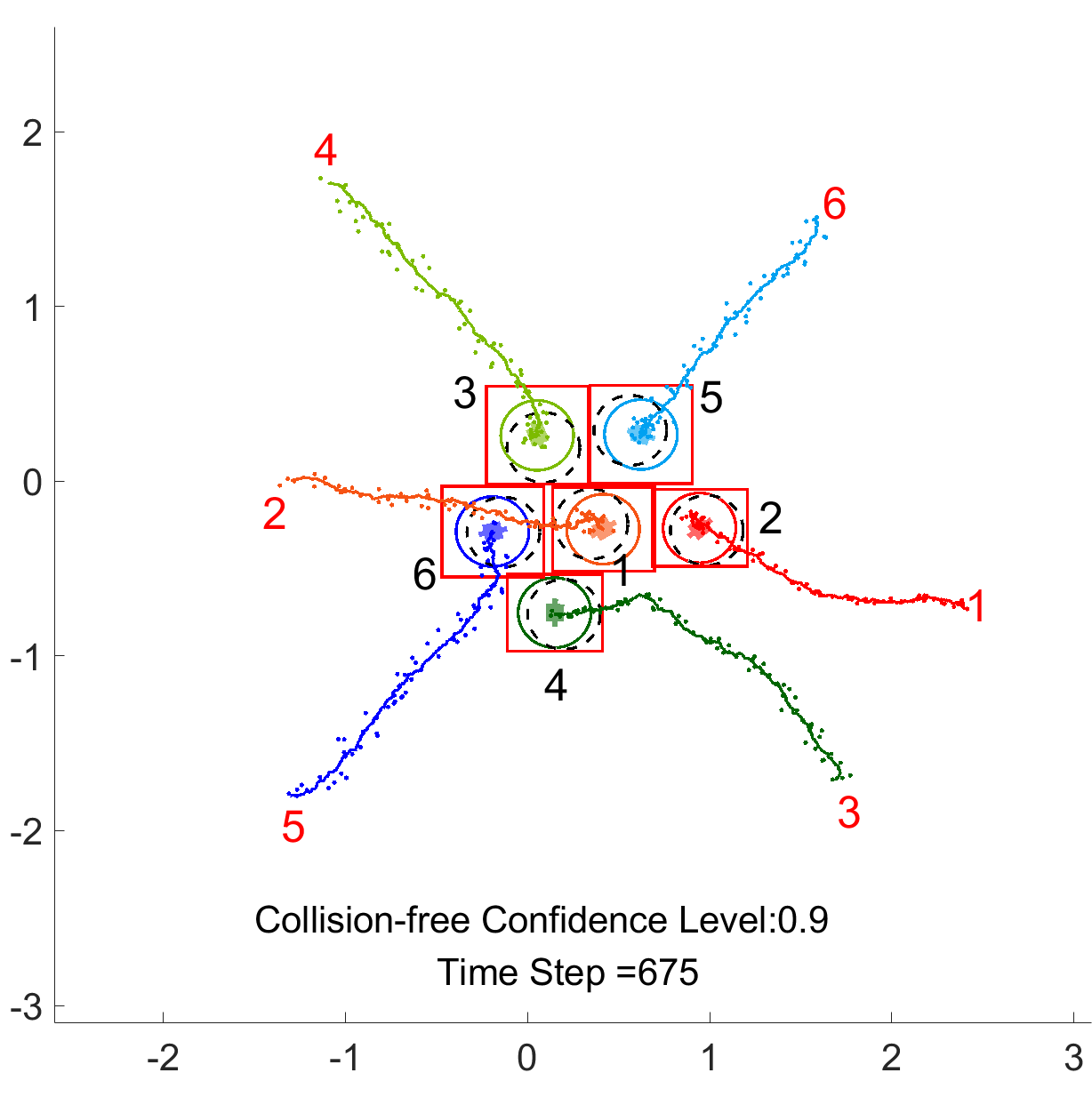}
    \caption{Time Step $=675$ (PrSBC)}
    \label{fig:middle_prsbc}
  \end{subfigure}
  \begin{subfigure}{0.24\textwidth}
\includegraphics[width=\textwidth]{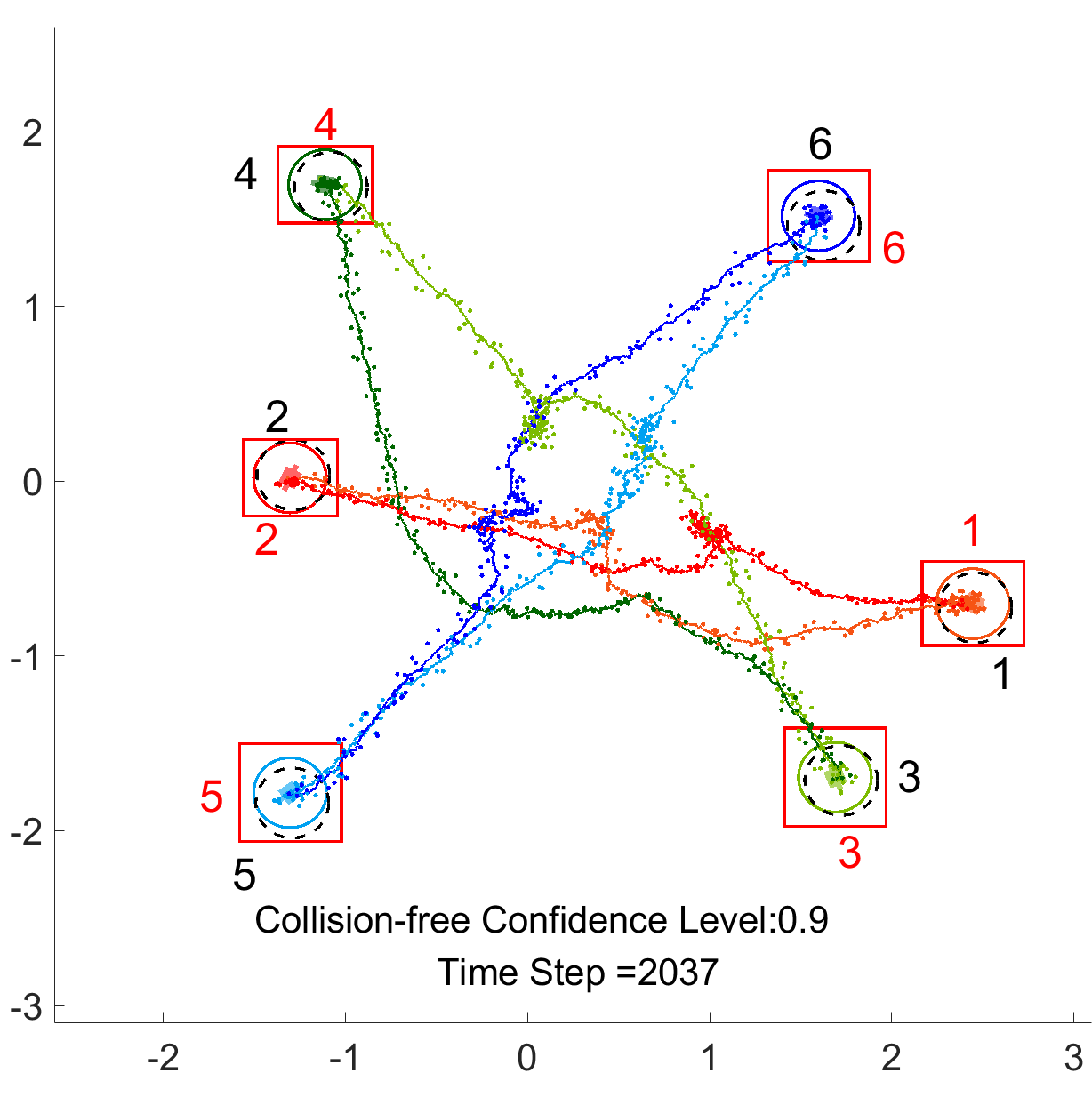}
    \caption{Time Step $=2037$ (PrSBC)}
    \label{fig:final_prsbc}
  \end{subfigure}
    \begin{subfigure}{0.24\textwidth}
\includegraphics[width=\textwidth]{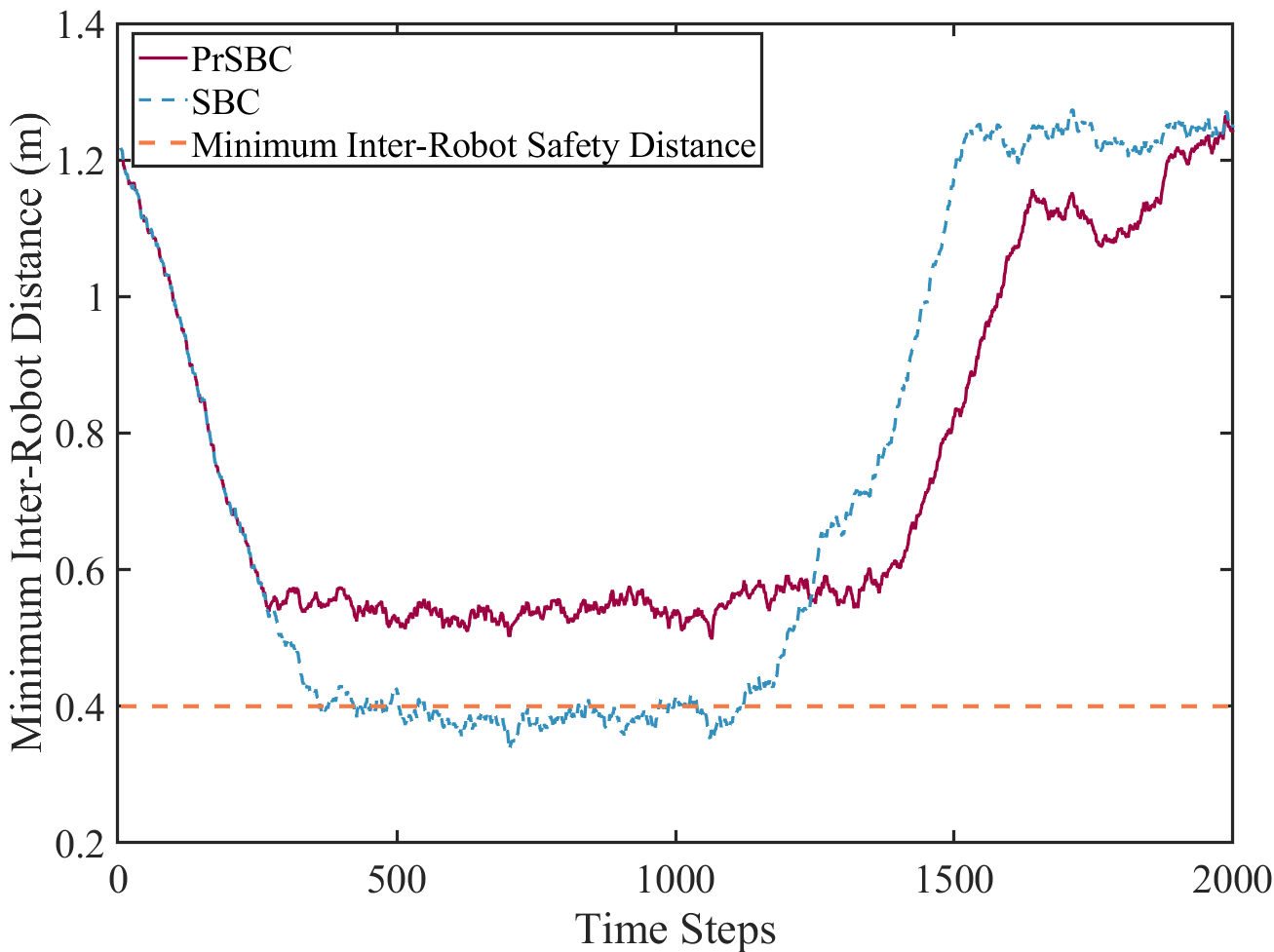}
    \caption{Minimum true inter-robot distance}
    \label{fig:sim1_dist}
  \end{subfigure}
  \begin{subfigure}{0.24\textwidth}
\includegraphics[width=\textwidth]{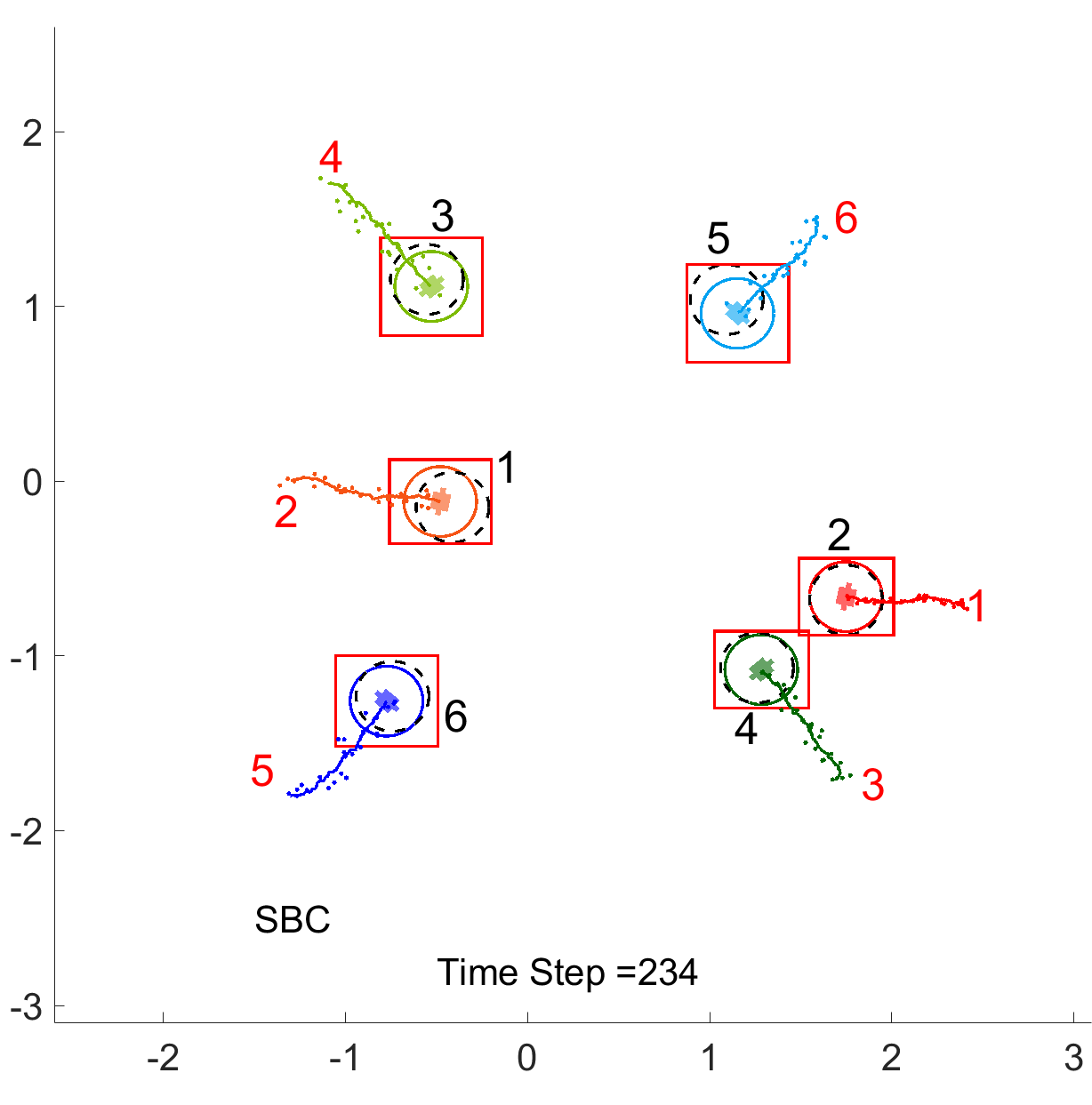}
    \caption{Time Step $=234$ (SBC)}
    \label{fig:init_sbc}
  \end{subfigure}
  \begin{subfigure}{0.24\textwidth}
\includegraphics[width=\textwidth]{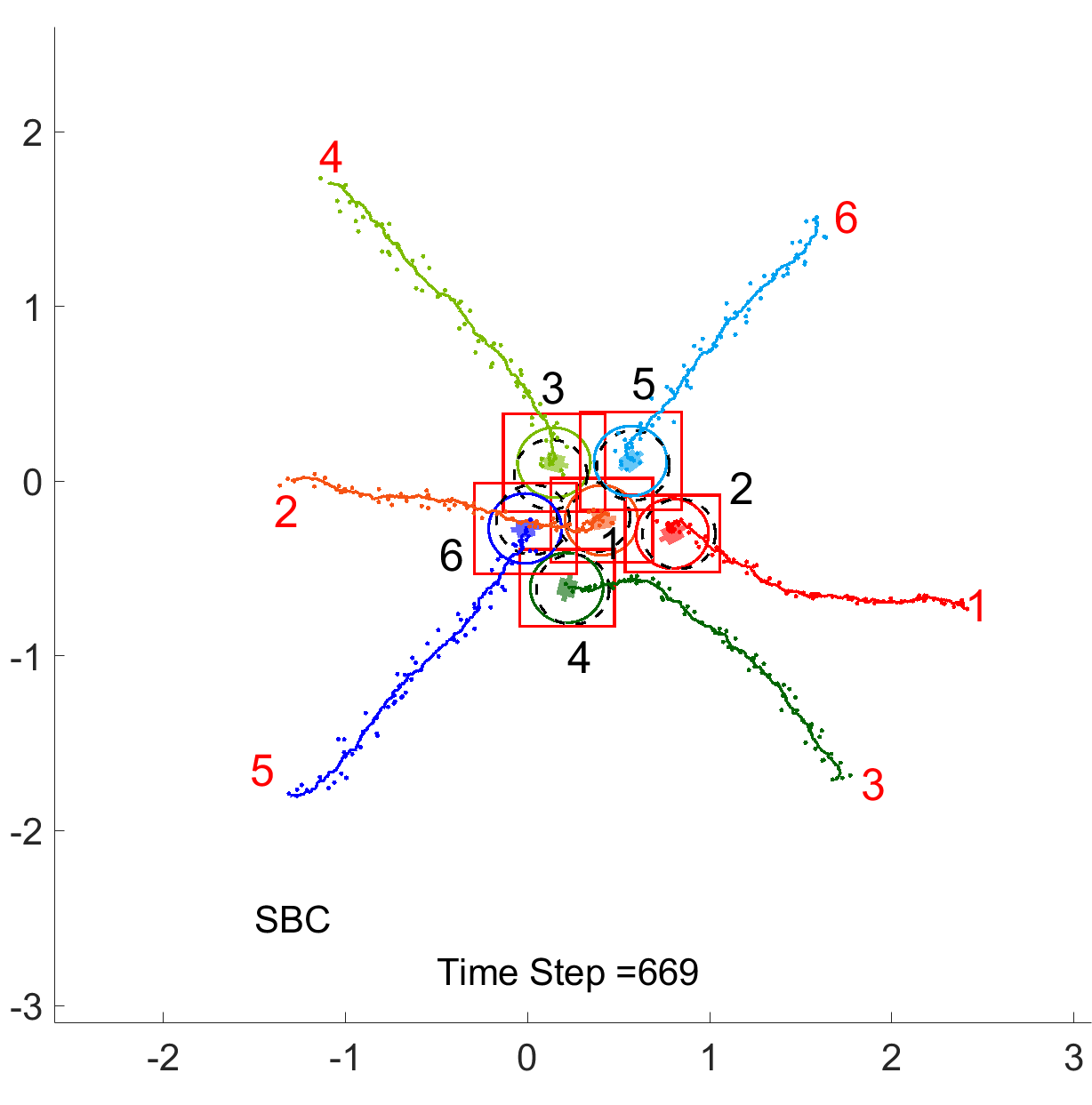}
    \caption{Time Step $=669$ (SBC)}
    \label{fig:middle_sbc}
  \end{subfigure}
  \begin{subfigure}{0.24\textwidth}
\includegraphics[width=\textwidth]{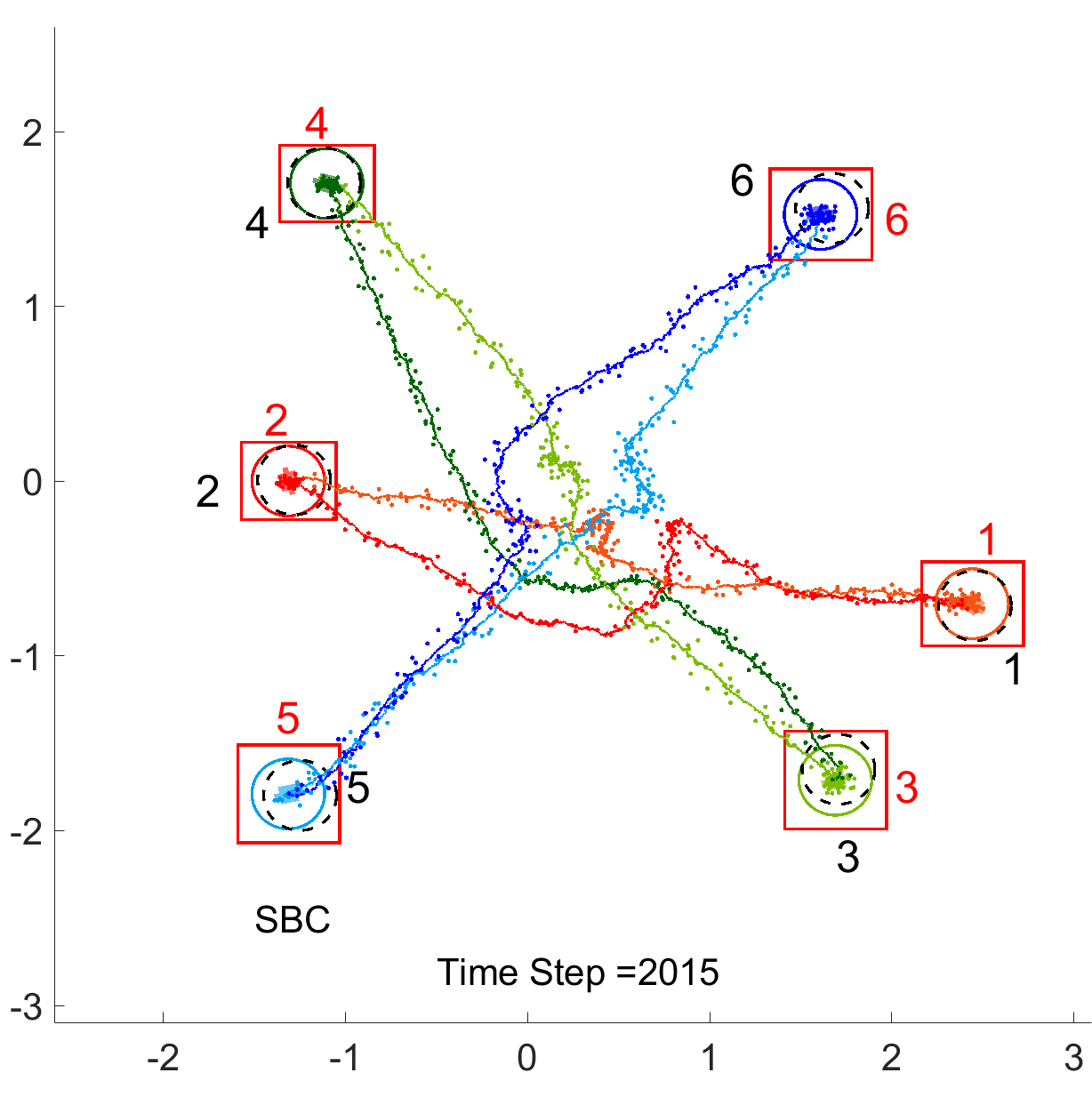}
    \caption{Time Step $=2015$ (SBC)}
    \label{fig:final_sbc}
  \end{subfigure}
    \begin{subfigure}{0.24\textwidth}
\includegraphics[width=\textwidth]{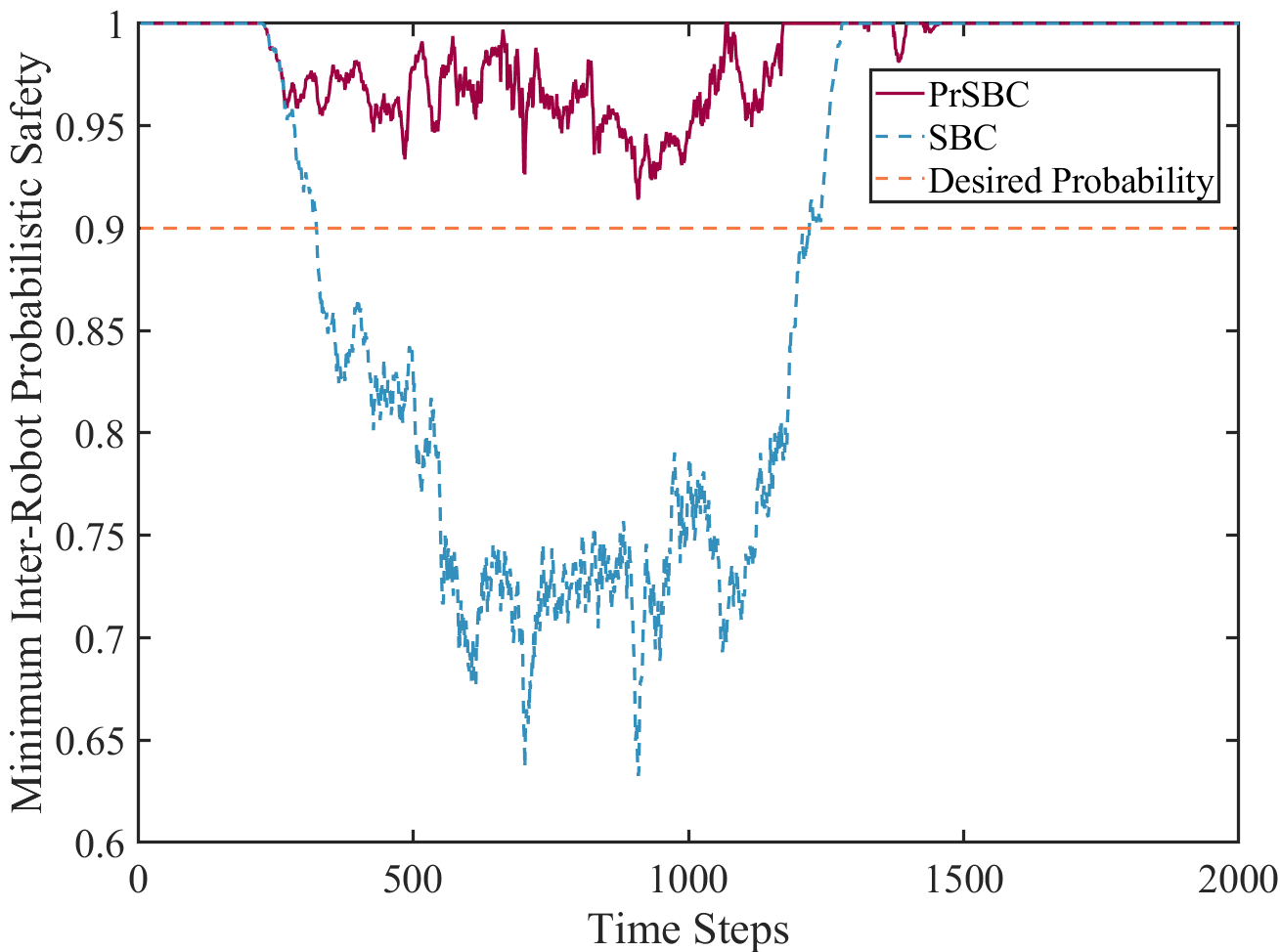}
    \caption{Minimum inter-robot probabilistic safety}
    \label{fig:sim1_prob}
  \end{subfigure}
\caption{Simulation example of 6 robots swapping positions with collision-free confidence level $\sigma = 0.9$. 
}%\vspace{-0.6cm}
  \label{fig:sim1}
\end{figure*}

\section{Experimental Evaluation and Results}\label{sec:exp}

\noindent \textbf{Simulation Example:}
Fig. \ref{fig:sim1} demonstrates the first set of simulations performed on a team of $N=6$ mobile robots with unicycle dynamics using our PrSBC from (\ref{eq:final_obj}) and the comparing deterministic SBC from \cite{wang2017safety}, with both in centralized setting. We employ nonlinear inversion method (\cite{pickem2017robotarium}) to map the desired velocity to the unicycle dynamics of mobile robots without compromising the safety guarantee. In this example,
all of the robots use the gradient based controller $\mathbf{u}_i^*=-K_p(\mathbf{x}_i-\mathbf{x}_{i,goal})$ as the nominal control input to swap their positions with the robot on the opposite side, e.g. robot 1 with 2, 3 with 4, and 5 with 6 shown in Fig. \ref{fig:init_prsbc}. Locations indexed in red are the goal positions for the corresponding robots. The robot safety radius is set to be $R_i=0.2$m and has bounded uniformly distributed localization error denoted by the red error box accounting for the safety radius. At each time step, every robot only has access to the noisy measurement marked by dashed black circle covering each robot. Maximum velocity limit is 0.1m/sec for the robots and robots motion is disturbed by randomly generated bounded noise with magnitude up to 0.07m/sec. The inter-robot collision-free confidence level $\sigma$ set to be 0.9.

As SBC \cite{wang2017safety} is designed for a deterministic system, here it takes the noisy measurement of the robots directly as the robot states to compose the SBC for collision avoidance controller. We observe from Fig. \ref{fig:middle_sbc} that collisions occur (robot 1 and 5) due to uncertainty in measured robot states as well as the motion disturbances. While with our PrSBC controller in (\ref{eq:final_obj}), robots safely navigate through the work space (Fig. \ref{fig:sim1_dist}) (but not too conservatively as it still allows interaction between bounding error box shown in Fig. \ref{fig:middle_prsbc} for probabilistic safety). In particular, results in Fig. \ref{fig:sim1_prob} indicates our PrSBC method successfully ensures the satisfying probabilistic safety ($\sigma=0.9$). This is computed by the minimum ratio between non-overlapping area and the area within robot's red bounding error box. 

\begin{figure}[!t]
\centering
  \begin{subfigure}[t]{0.24\textwidth}
    \includegraphics[width=\textwidth]{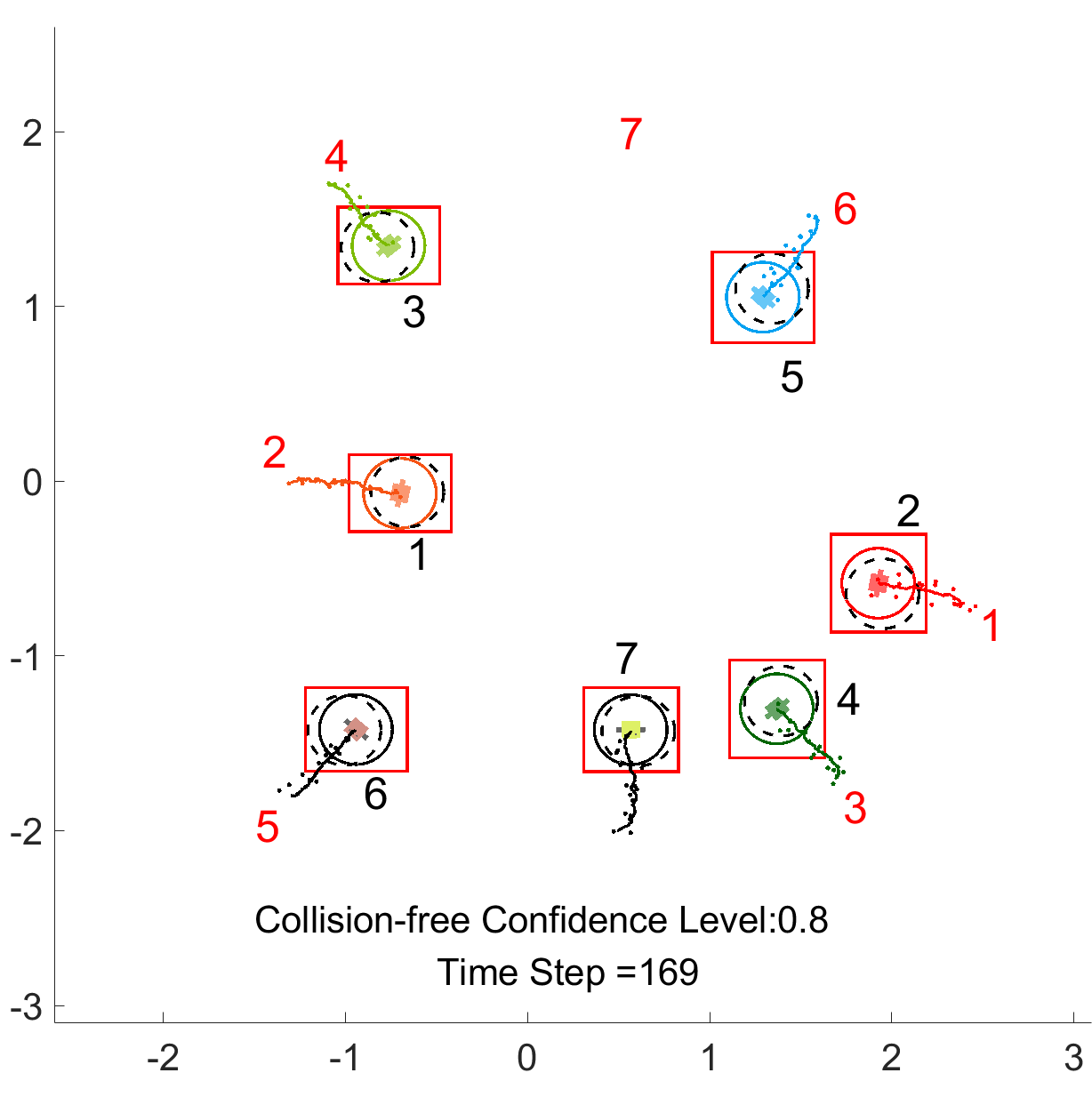}
    \caption{Time Step = 169}
    \label{fig:sim2_init}
  \end{subfigure}\hfill
  \begin{subfigure}[t]{0.24\textwidth}
    \includegraphics[width=\textwidth]{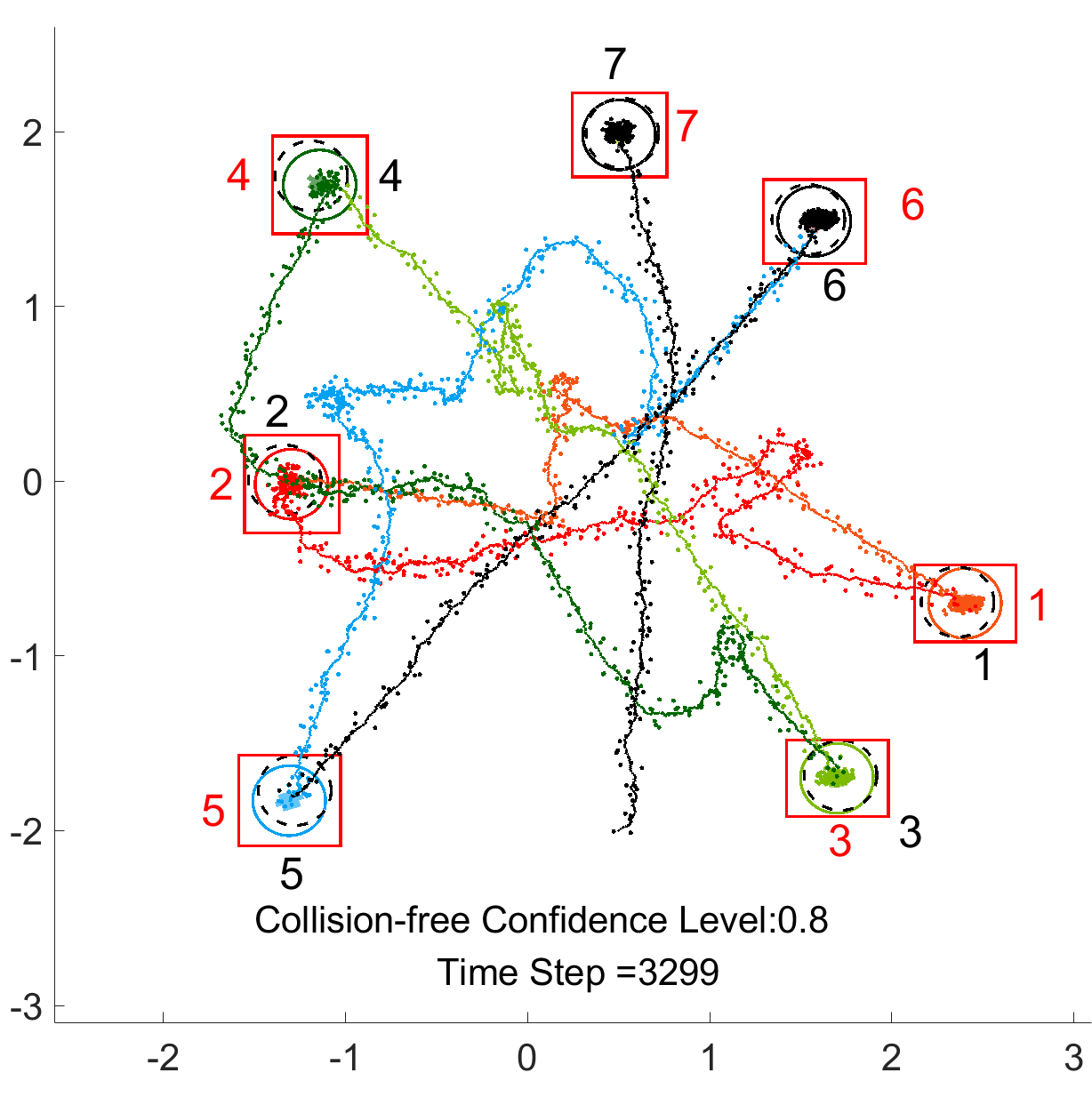}
    \caption{Final Configurations}
    \label{fig:sim2_final}
  \end{subfigure}\hfill
  \begin{subfigure}[t]{0.24\textwidth}
    \includegraphics[width=\textwidth]{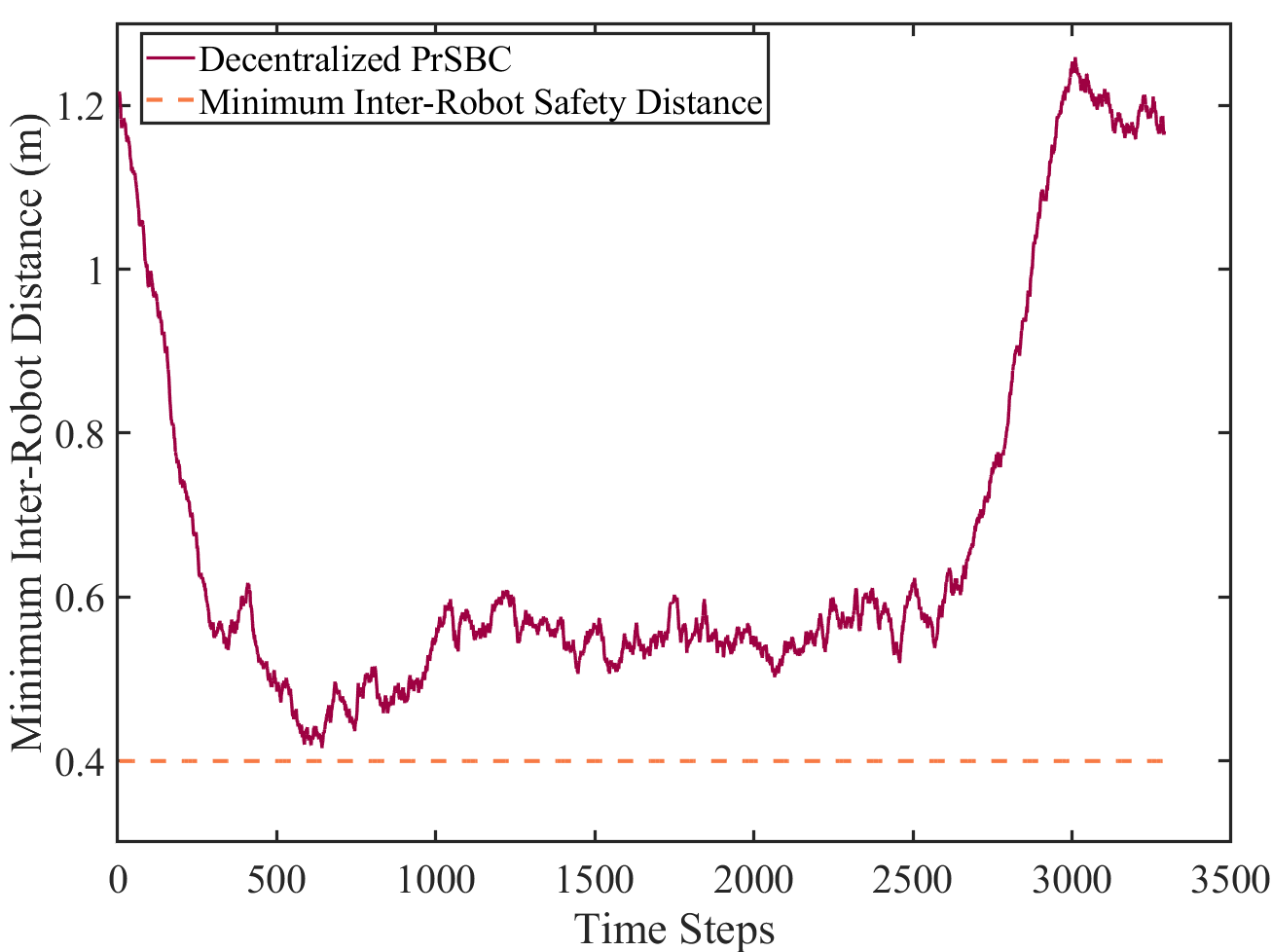}
    \caption{Minimum inter-robot distance}
    \label{fig:sim2_distance}
  \end{subfigure}\hfill
  \begin{subfigure}[t]{0.24\textwidth}
    \includegraphics[width=\textwidth]{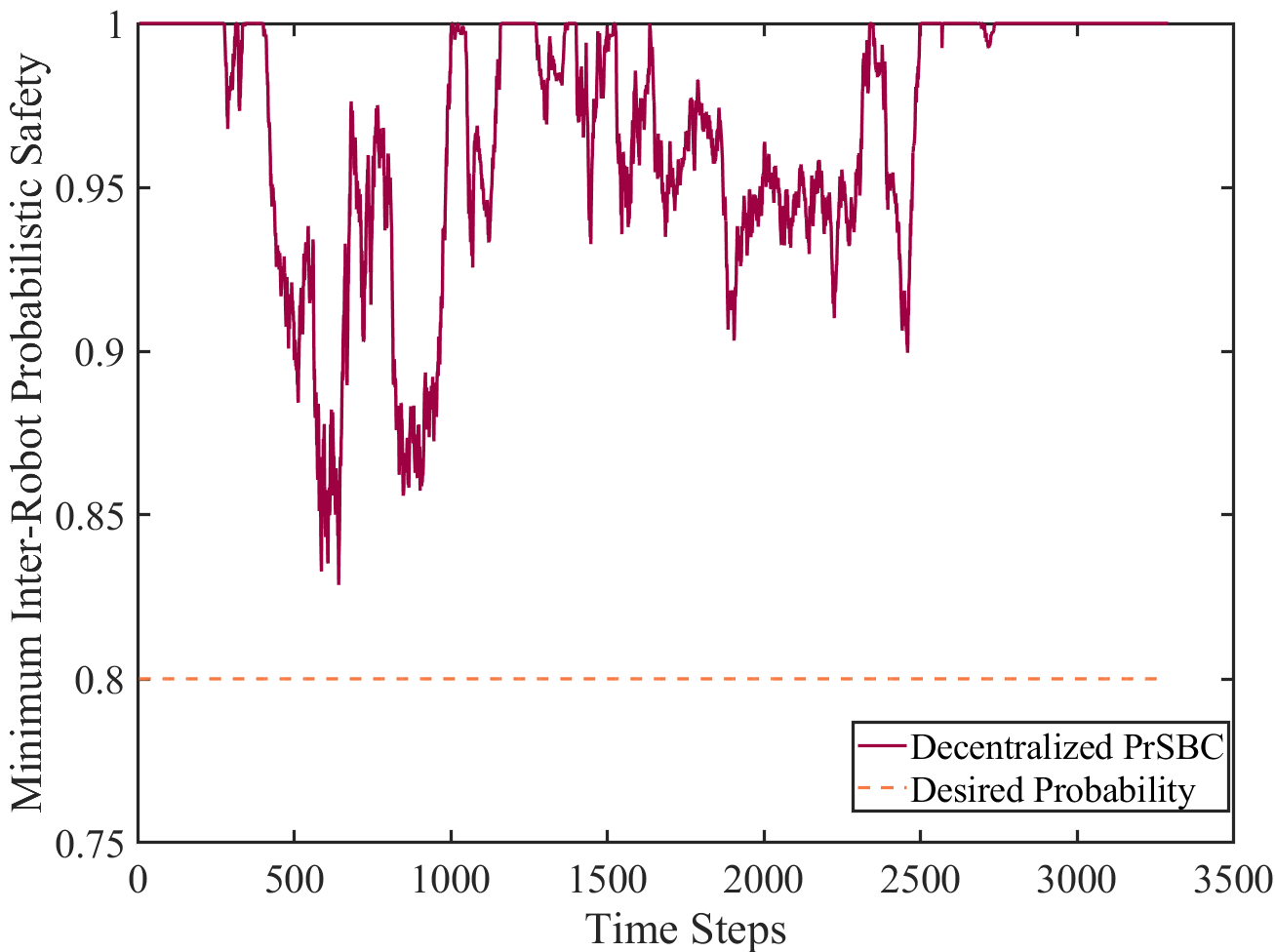}
    \caption{Minimum probabilistic safety}
    \label{fig:sim2_prob}
    %\vspace{-20pt}
  \end{subfigure}
\caption{Decentralized PrSBC with 7 robots. Robots 6 and 7 marked in black serve as passive moving obstacles without interaction to other robots.}%\vspace{-0.7cm}
  \label{fig:sim2}
\end{figure}

\noindent\textbf{Scenario with Dynamic Obstacles:}
To account for dynamic obstacles, we add robot 7 to the previous scenario and make robot 6 and 7 serve as the non-cooperating passive moving obstacles. Fig. \ref{fig:sim2} highlights our observations from this experiment. We assume robots can identify them as obstacles instead of cooperating robots. With the same set-up except for the two obstacles, we demonstrate the performance of our controller based on decentralized PrSBC in (\ref{eq:final_obj_dec}) and set the inter-robot, robot-obstacle collision-free confidence $\sigma=\sigma_o =0.8$ to encourage more flexible motion. In the decentralized settings, robots are set to assume equal responsibility in collision avoidance, i.e. $p_{ij}=p_{ji}=0.5$ in (\ref{eq:PrSBC_dec_constraint}) for each robot, and thus no communication is needed between robots. Results in Fig. \ref{fig:sim2_distance} and \ref{fig:sim2_prob} indicate the inter-robots and robot-obstacle are collision free and with a satisfying probabilistic safety close to $\sigma=0.8$ (thus not overly conservative). From Fig. \ref{fig:sim2_final} it is noted that robot 5 with light blue trajectory took a large detour before reaching the goal position. This is caused by the non-cooperating obstacle robot 6 and 7 in the way, where the PrSBC for obstacles (\ref{eq:PrSBC_obs}) forces the robot 5 to obey the more restrictive constraints to adapt to the momentum in order to guarantee the satisfying probabilistic collision avoidance performance. 
% wrap figure {r}
\clearpage
\begin{wrapfigure}{r}{0.6\textwidth}
\centering
\begin{subfigure}[t]{0.3\textwidth}
    \includegraphics[width=\textwidth]{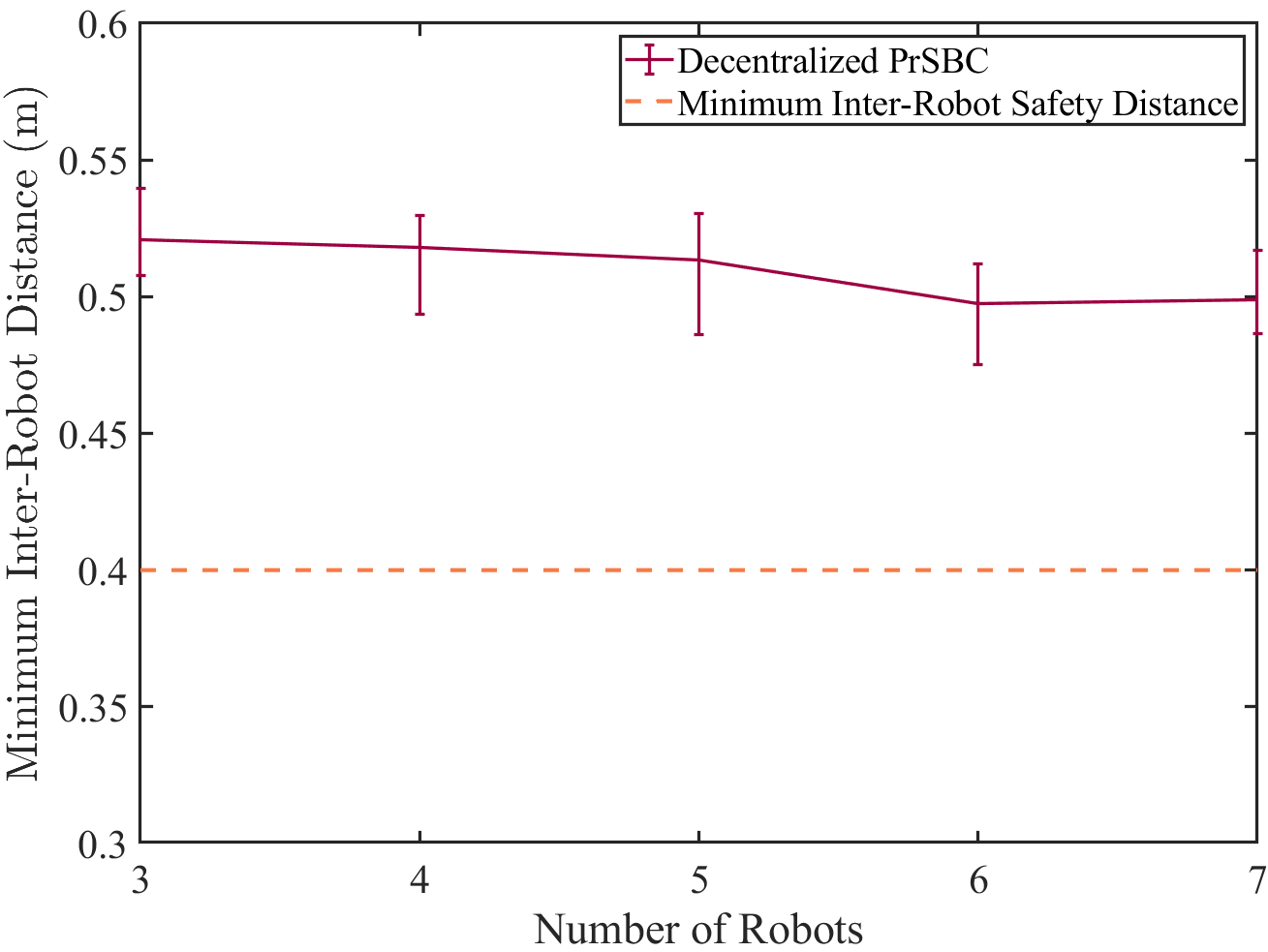}
    \caption{Minimum inter-robot distance}
    \label{fig:sim3_distance}
  \end{subfigure}\hfill
  \begin{subfigure}[t]{0.3\textwidth}
    \includegraphics[width=\textwidth]{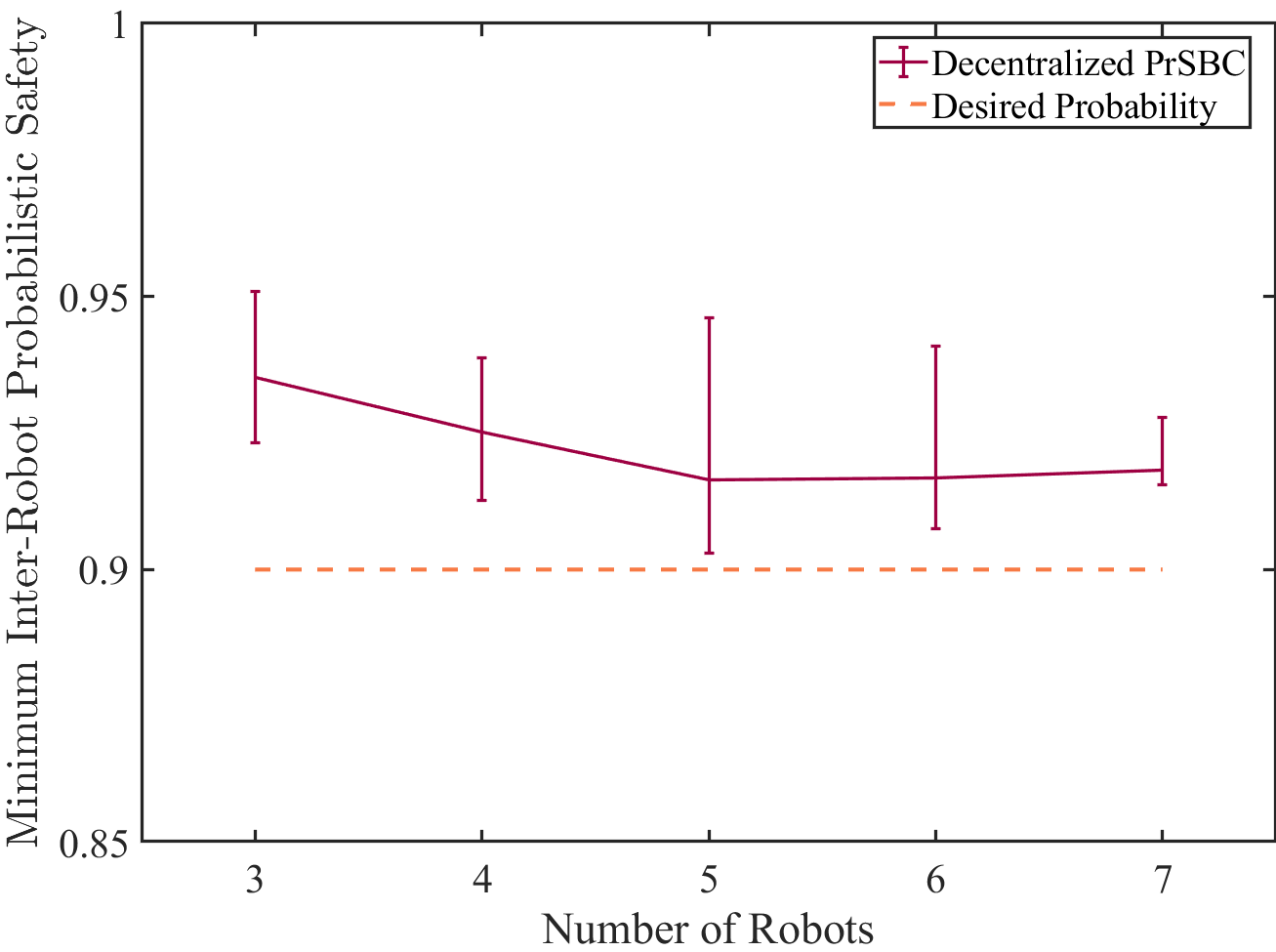}
    \caption{Minimum probabilistic safety}
    \label{fig:sim3_prob}
    %\vspace{-20pt}
  \end{subfigure}
\caption{Quantitative results summary of PrSBC from 50 random trials.}%\vspace{0.6cm}
  \label{fig:sim3}
\end{wrapfigure}

\begin{figure*}%[!t]
\centering
  \begin{subfigure}[t]{0.3\textwidth}
    \includegraphics[width=\textwidth]{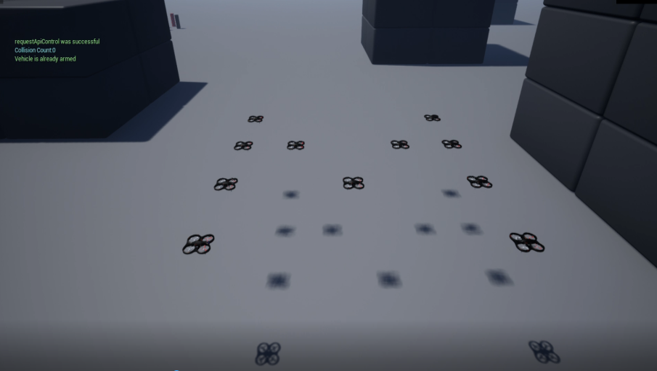}
    \caption{Drones forming "M"}
    \label{fig:sim4_airsim_m}
  \end{subfigure}\hfill
  \begin{subfigure}[t]{0.3\textwidth}
    \includegraphics[width=\textwidth]{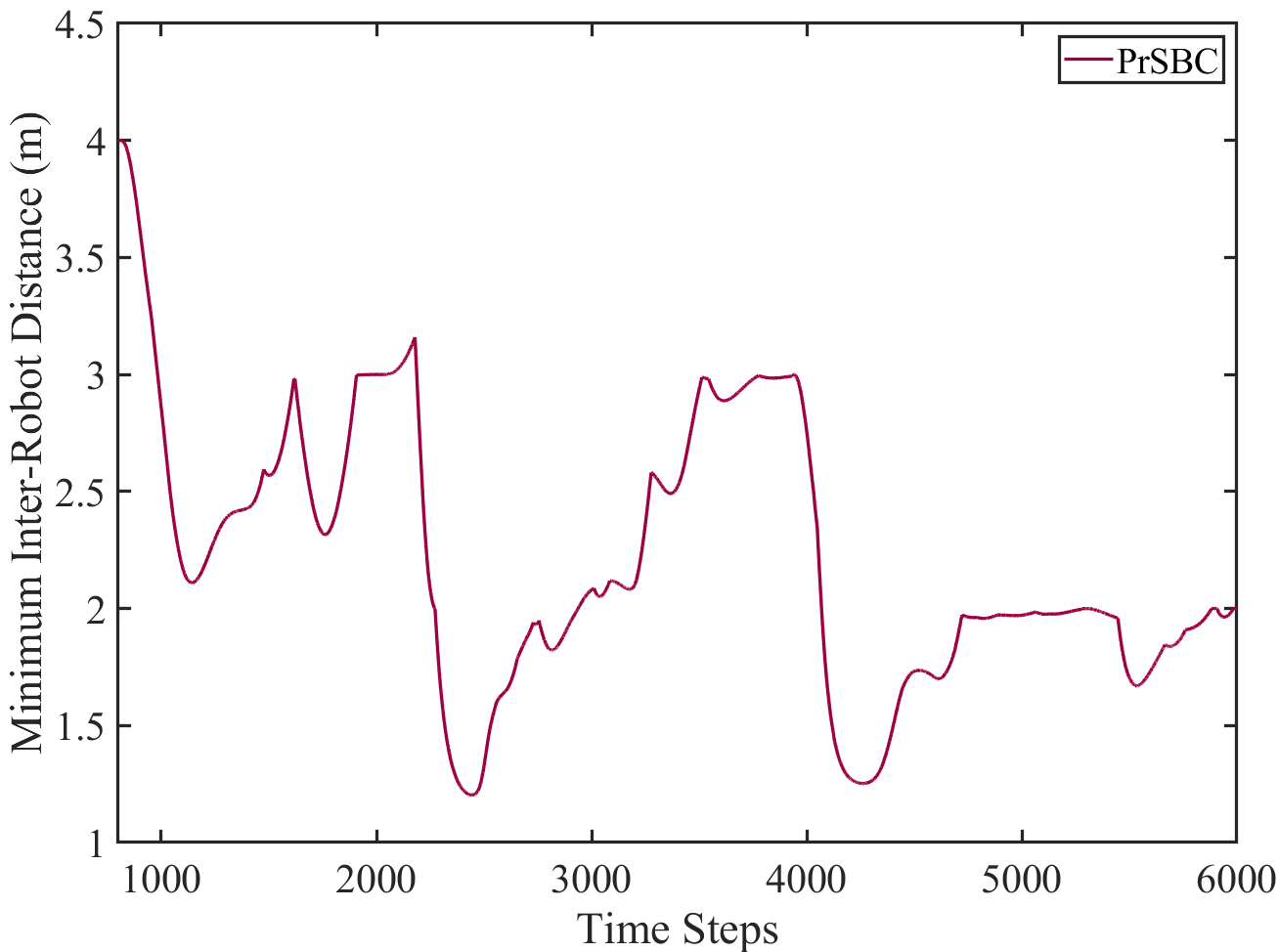}
    \caption{Minimum inter-robot distance}
    \label{fig:sim4_dist}
  \end{subfigure}\hfill
  \begin{subfigure}[t]{0.3\textwidth}
    \includegraphics[width=\textwidth]{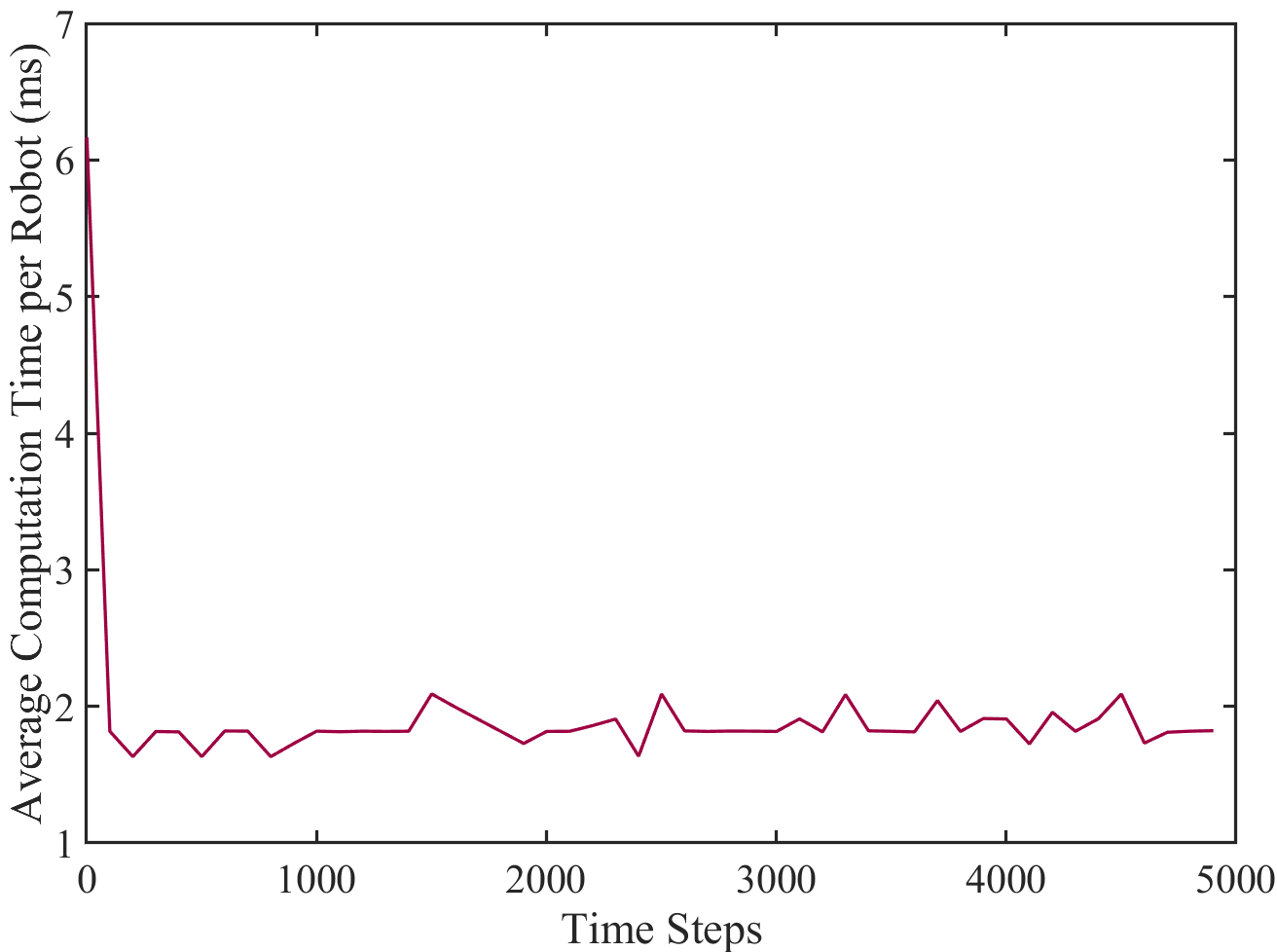}
    \caption{Average Computation Time}
    \label{fig:sim4_time}
    %\vspace{-20pt}
  \end{subfigure}
\caption{AirSim \cite{shah2018airsim} experiment snapshot with 11 drones using our PrSBC for collision avoidance.}%\vspace{-0.7cm}
  \label{fig:sim4}
\end{figure*}

\noindent \textbf{Quantitative Results:} We performed 50 random trials with different number of robots under a required confidence $\sigma=0.9$ to validate the effectiveness of our decentralized PrSBC controller in presence of random measurement and motion noise. Fig. \ref{fig:sim3_distance} and \ref{fig:sim3_prob} shows that the robots are always safe and satisfy the probabilistic safety guarantee using PrSBC.

\noindent \textbf{Experimental Results:} Finally, as shown in Fig. \ref{fig:sim4}, we carried out experiments with 11 simulated drones in AirSim \cite{shah2018airsim}, an open-source near-realistic simulation environment. The dynamics model of a quadrotor is a 12 dimensional underactuated system \cite{Sadigh-RSS-16, wang2018safe} and given its differential flatness property, vector-field based controller \cite{zhou2014vector} could be employed to map the input velocity command to the quadrotor dynamics without compromising safety guarantee. The primary task for the drones is to sequentially form the letters of M-S-F-T while avoiding collisions with each other with the minimum probability of 0.9. Each of the drones has the pre-defined target position in the letter formation and they execute the gradient based controller to move towards it. The safety radius between pairwise drones is $1m$ and the state estimation noise is between $[-0.2m, 0.2m]$. We then employ our PrSBC controller to compute the linear velocity for each drone and feed it to the vector-based drone controller in the simulator. During the task, no collisions are observed as shown in Fig. \ref{fig:sim4_dist}. The simulations are on personal laptop with Intel Core i7-8750H CPU of 2.20 GHz. The average computation time per robot is below $2ms$ as reported in Fig. \ref{fig:sim4_time}, demonstrating the efficiency of our PrSBC in real-time computation. Readers are encouraged to look to details of the experiments in the Video attachments.\vspace{-0.3cm}

\section{Conclusions and Future Work}
We presented a probabilistic approach to address chance constrained collision avoidance for a system of multiple robots in real-world settings. We address the complexities that arise due to uncertainty in perception and incompleteness in modeling the underlying dynamics of the system. The key idea is to induce probabilistic constraints via safety barriers, which are then used to minimally modify an existing controller via a constrained quadratic program. We formally define Probabilisitc Safety Barrier Certificates that guarantee forward-invariance in time continuously and also can be decomposed so as to enable de-centralized computation of the safe controllers. Future work entails extensions to model-free controllers trained via Reinforcement Learning and implementation to solve real-world tasks, such as Automatic Collision Avoidance System for manned and unmanned aircraft.
We plan to employ variants of CBF such as ECBF \cite{nguyen2016exponential} to explicitly handle higher relative degree system. On the other hand, extending the expressivity of the PrSBC formulation with different forms of the safe set $h^s(\mathbf{x})$ to address other uncertainty-aware safety consideration beyond collision avoidance is also an important future direction, e.g. limiting the number of drones within a volume, and adapting to temporal safety tasks using signal temporal logic (STL) formulations \cite{lindemann2018control}.

\section*{Broader Impact}

The objective of this work is to provide an explicit safety design for multi-robot systems in terms of collision avoidance that could guarantee probabilistic safety in real-world applications under uncertainty. This is a critical component towards AI and robotics safety \cite{amodei2016concrete} when we envision a future with significant increase on AI and multi-robot deployments to our society. As pointed out in \cite{amodei2016concrete}, while we have seen successful efforts in the aircraft collision avoidance system, the same verification tools are often unable to be directly applied to modern autonomous system powered by AI and machine learning under uncertainty, e.g. autonomous drone fleets. And yet this technique is in high demands considering its wide applications that are rapidly growing at scale. The ultimate goal of this work is to develop such a model-based, formally provable automatic collision avoidance system (ACAS) for autonomous aerial robots that work with various uncertainty models developed by perception modules, AI and machine learning technologies, and to enable runtime verification and mitigation so that the executed control policies are safe at all times. An intuitive example is to consider the transfer of a control policy trained in a simulator to the real-world deployment. In this case, it is desired to have an unbiased \emph{barrier} wrapped around the policy so that the safety is always ensured in the first place, which is the exact purpose of our proposed probabilistic safety barrier certificates (PrSBC) constraints. We believe our work will lead to fruitful results on safety improvements for both civil applications and academic research. On the other hand, as critical as the safety itself, the consequence of failure of such safety system could also be catastrophic in nature. For example, very inaccurate robot dynamics model and super inferior sensing information from the environments could cast immense threats to the safety design. We strive to minimize these factors by always accounting for uncertainty, properly leveraging conservativeness and absolute safety, and including worst-case analysis to increase the robustness of our design.
 
%Authors are required to include a statement of the broader impact of their work, including its ethical aspects and future societal consequences. 
%Authors should discuss both positive and negative outcomes, if any. For instance, authors should discuss a) 
%who may benefit from this research, b) who may be put at disadvantage from this research, c) what are the consequences of failure of the system, and d) whether the task/method leverages
%biases in the data. If authors believe this is not applicable to them, authors can simply state this.

%Use unnumbered first level headings for this section, which should go at the end of the paper. {\bf Note that this section does not count towards the eight pages of content that are allowed.}

% \begin{ack}
% Use unnumbered first level headings for the acknowledgments. All acknowledgments
% go at the end of the paper before the list of references. Moreover, you are required to declare 
% funding (financial activities supporting the submitted work) and competing interests (related financial activities outside the submitted work). 
% More information about this disclosure can be found at: \url{https://neurips.cc/Conferences/2020/PaperInformation/FundingDisclosure}.

% Do {\bf not} include this section in the anonymized submission, only in the final paper. You can use the \texttt{ack} environment provided in the style file to autmoatically hide this section in the anonymized submission.
% \end{ack}

\bibliographystyle{plainnat}
\bibliography{ref}

% \section*{References}

% References follow the acknowledgments. Use unnumbered first-level heading for
% the references. Any choice of citation style is acceptable as long as you are
% consistent. It is permissible to reduce the font size to \verb+small+ (9 point)
% when listing the references.
% {\bf Note that the Reference section does not count towards the eight pages of content that are allowed.}
% \medskip

% \small

% [1] Alexander, J.A.\ \& Mozer, M.C.\ (1995) Template-based algorithms for
% connectionist rule extraction. In G.\ Tesauro, D.S.\ Touretzky and T.K.\ Leen
% (eds.), {\it Advances in Neural Information Processing Systems 7},
% pp.\ 609--616. Cambridge, MA: MIT Press.

% [2] Bower, J.M.\ \& Beeman, D.\ (1995) {\it The Book of GENESIS: Exploring
%   Realistic Neural Models with the GEneral NEural SImulation System.}  New York:
% TELOS/Springer--Verlag.

% [3] Hasselmo, M.E., Schnell, E.\ \& Barkai, E.\ (1995) Dynamics of learning and
% recall at excitatory recurrent synapses and cholinergic modulation in rat
% hippocampal region CA3. {\it Journal of Neuroscience} {\bf 15}(7):5249-5262.

\clearpage
\appendix
\section{Appendix}
Equation indexes from (1)-(17) follow the original indexes appearing in the paper and new equations start from (18) in this appendix.

\subsection{Proof of Theorem 3} %(Line 182-184)}
% Recall in line 115-117 that: Denote the joint robot states as $\mathbf{x}=\{\mathbf{x}_1,\ldots,\mathbf{x}_N\}\in\mathcal{X}\subset \mathbb{R}^{d\times N}$ and the joint obstacle states as $\mathbf{x}_o=\{\mathbf{x}_1,\ldots,\mathbf{x}_K\}\in\mathcal{X}_o\in \mathbb{R}^{d\times K}$. For any pair-wise inter-robot or robot-obstacle collision avoidance between robots $i,j\in\mathcal{I}$ and obstacles $k\in\mathcal{O}$, the safety of $\mathbf{x}$ is defined as follows.
% %\vspace{-0.4cm}
% {\footnotesize
% \begin{align}
% h_{i,j}^s(\mathbf{x})&=\norm{\mathbf{x}_i-\mathbf{x}_j}^2-(R_i+R_j)^2\;, \; \forall i>j\;,\quad % \nonumber\\
% h_{i,k}^s(\mathbf{x}, \mathbf{x}_o)=\norm{\mathbf{x}_i-\mathbf{x}_k}^2-(R_i+R_k)^2\;,\; \forall i,k \label{eq:safe_h}\tag{2}\\ 
% \mathcal{H}_{i,j}^s&=\{\mathbf{x}\in \mathbb{R}^{d\times N}:h^s_{i,j}(\mathbf{x})\geq 0\}\;,\;\forall i>j\;,\quad
% \mathcal{H}_{i,k}^s=\{\mathbf{x}\in \mathbb{R}^{d\times N}:h^s_{i,k}(\mathbf{x},\mathbf{x}_o)\geq 0\}\;,\; \forall i,k \label{eq:safesetij}\tag{3}
% \end{align}
% }%\vspace{-0.6cm}
\begin{theorem}\label{theorm: existence of PrSBC}
\textbf{Existence of PrSBC:} Assuming all pairwise robots are initially collision-free at $t=0$, i.e. equ. (\ref{eq:safe_h}) holds true for all possible value of random state variables $\mathbf{x}_i\in [\hat{\mathbf{x}}_i-\Delta \mathbf{v}_i, \hat{\mathbf{x}}_i+\Delta \mathbf{v}_i], \forall i\in\mathcal{I}$, then the PrSBC defined in equ. (\ref{eq:def_PrSBC}) is guaranteed to exist for any given confidence level $\sigma \in [0,1]$.
{%\footnotesize
\begin{align}
h_{i,j}^s(\mathbf{x})&=\norm{\mathbf{x}_i-\mathbf{x}_j}^2-(R_i+R_j)^2\;, \; \forall i>j\; \label{eq:safe_h}\tag{2}
\end{align}
\begin{equation}\label{eq:def_PrSBC}%\footnotesize
    \mathcal{S}_u^\sigma = \{\mathbf{u}\in\mathbb{R}^{mN}\;|\; A_{ij}^\sigma\mathbf{u}\leq b^\sigma_{ij},\quad \forall i>j, \;\mathbf{A}^\sigma\in \mathbb{R}^{n\times mN}, \;\mathbf{b}^\sigma\in \mathbb{R}^n\}\tag{10}
\end{equation}
}%\vspace{-0.6cm}
\end{theorem}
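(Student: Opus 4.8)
The plan is to show that the polyhedron $\mathcal{S}_u^\sigma$ carved out by the pairwise linear constraints $A^\sigma_{ij}\mathbf{u}\leq b^\sigma_{ij}$ is non-empty. Because each such half-space is, by Lemma~\ref{lemma:forward}, a deterministic sufficient condition for the pairwise chance constraint $\text{Pr}(\mathbf{u}_i,\mathbf{u}_j\in\mathcal{B}_{i,j}^s(\mathbf{x}))\geq\sigma$, I would first prove that for a single pair this chance constraint admits a solution, and then argue that one common control lies in every pairwise half-space at once. The key structural fact I would extract from the hypothesis is that initial safety is \emph{uniform} over the support: since $h^s_{i,j}(\mathbf{x})\geq 0$ for every $\mathbf{x}_i\in[\hat{\mathbf{x}}_i-\Delta\mathbf{v}_i,\hat{\mathbf{x}}_i+\Delta\mathbf{v}_i]$, we have $\norm{\mathbf{x}_i-\mathbf{x}_j}\geq R_i+R_j>0$ with probability one, so the gradient $\partial h^s_{i,j}/\partial\mathbf{x}=2(\mathbf{x}_i-\mathbf{x}_j)^\top$ never vanishes on the support and the control therefore retains authority to steer the barrier in at least one coordinate.

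For the single-pair step I would work from the reformulated constraint (\ref{eq:org_ineq}) and apply a Bayesian decomposition $\text{Pr}(\cdot)=\mathbb{E}_{\mathbf{x}}[\text{Pr}(\cdot\mid\mathbf{x})]$, conditioning on the true state $\mathbf{x}$ (supported on the finite box determined by $\mathbf{v}$) so that the event reduces to a single linear inequality in the bounded motion noise $\Delta\mathbf{w}_{i,j}$. Since $\mathbf{w}_i,\mathbf{w}_j$ have finite support, the conditional probability equals the chance that the bounded quantity $-\frac{\partial h^s_{i,j}}{\partial\mathbf{x}}\Delta\mathbf{w}_{i,j}$ falls below the margin $\gamma h^s_{i,j}(\mathbf{x})+\frac{\partial h^s_{i,j}}{\partial\mathbf{x}}(\Delta F_{i,j}+G_{i,j}\mathbf{u}_{i,j})$; it is continuous and nondecreasing in that margin and saturates at $1$ once the margin clears the range of the noise term. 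Because the gradient is nonzero, scaling $G_i\mathbf{u}_i-G_j\mathbf{u}_j$ along the measured separation drives the margin positive on the bulk of the support, so the unconditional probability $P(\mathbf{u}_{i,j})$ is continuous in $\mathbf{u}_{i,j}$ and can be pushed above $\sigma$; the intermediate value theorem then yields a control with $P(\mathbf{u}_{i,j})=\sigma$ and hence a non-empty superlevel set $\{P\geq\sigma\}$, proving the pairwise case.

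The final step, promoting pairwise feasibility to joint feasibility, is where I expect the real difficulty, because the half-spaces for different pairs share the variables $\mathbf{u}_i$ and may compete. The plan is to exhibit a single control feasible for all pairs simultaneously using the freedom to enlarge $\gamma$: under uniform initial safety the drift term $2(\mathbf{x}_i-\mathbf{x}_j)^\top(F_i-F_j+\Delta\mathbf{w}_{i,j})$ at $\mathbf{u}=0$ is bounded across the finite support while $\gamma h^s_{i,j}(\mathbf{x})\geq 0$, so for $\gamma$ large enough the SBC inequality holds with probability one---hence $\geq\sigma$---for every pair, placing $\mathbf{u}=0$ in $\mathcal{S}_u^\sigma$. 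The main obstacle is the boundary regime in which the worst-case separation equals the safety radius for some pair, so $\min_{\mathbf{x}}h^s_{i,j}(\mathbf{x})=0$ and the large-$\gamma$ argument no longer forces feasibility at $\sigma=1$; there one must build a strictly separating control pairwise from the non-vanishing gradient and verify these choices are mutually consistent across pairs, which is exactly where the per-dimension construction in (\ref{eq:PrSBC_element}) and the fact that $\mathbf{e}^l_{i,j}=0$ can occur in at most $d-1$ coordinates are needed to guarantee a common admissible $\mathbf{u}$.
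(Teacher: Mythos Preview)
Your overall structure—establish pairwise feasibility via a continuity argument, then lift to joint feasibility—matches the paper's, but the witness controls differ. Rather than scaling along the separation direction or invoking large $\gamma$, the paper completes the square in (\ref{eq:org_ineq}) to recast the event as $\Delta\mathbf{x}_{i,j}$ lying outside a $(d{-}1)$-sphere with center $-\tfrac{1}{\gamma}(\dot{\mathbf{x}}_i-\dot{\mathbf{x}}_j)$ and squared radius $R_{ij}^2+\|\tfrac{1}{\gamma}(\dot{\mathbf{x}}_i-\dot{\mathbf{x}}_j)\|^2$, and then exhibits as witness any $\mathbf{u}^0$ for which $\dot{\mathbf{x}}_i-\dot{\mathbf{x}}_j=0$: the sphere collapses to $\|\Delta\mathbf{x}_{i,j}\|\geq R_{ij}$, which holds with probability one by the uniform-safety hypothesis, and the probability then falls continuously to $0$ as $\|\dot{\mathbf{x}}_i-\dot{\mathbf{x}}_j\|$ grows and the sphere drifts into the support box $\Omega_{i,j}$. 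This witness is insensitive to whether $\min_{\mathbf{x}}h^s_{i,j}(\mathbf{x})$ is strictly positive or merely nonnegative, so the boundary regime you flag simply does not arise. For joint feasibility the paper does not use large $\gamma$ either; it declares the extension ``straightforward,'' implicitly because a single common-velocity control serves as a probability-one witness for every pair simultaneously. Your large-$\gamma$ route with $\mathbf{u}=0$ is therefore a genuinely different argument: it spares you from solving for a common-velocity control, but at the cost of the boundary caveat you identify and of tying existence to a parameter the paper treats as fixed—whereas the paper's geometric zero-relative-velocity witness sidesteps both issues.
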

\begin{proof}
We start by proving the existence of PrSBC between each pairwise robots $i$ and $j$ with any user-defined confidence level $\sigma\in[0,1]$. Given the sufficiency condition of $\text{Pr}(\mathbf{u}_i,\mathbf{u}_j\in \mathcal{B}_{i,j}^s(\mathbf{x}))\geq \sigma$ in (\ref{eq:prob_safe_bc}) with pairwise version of (\ref{eq:safebarrier}) rendering desired chance constrained safety $\text{Pr}(\mathbf{x}_i,\mathbf{x}_j\in \mathcal{H}_{i,j}^s)\geq \sigma$:
\begin{equation}\tag{8}\label{eq:safebarrier}%\footnotesize
\begin{split}
\mathcal{B}^s(\mathbf{x})&=\{\mathbf{u}\in \mathbb{R}^{m N}:\dot{h}^s_{i,j}(\mathbf{x},\mathbf{u})+\gamma h^s_{i,j}(\mathbf{x})\geq 0, \;\forall i>j\} \\
\mathcal{B}_{i,j}^s(\mathbf{x})&=\{\mathbf{u}_i,\mathbf{u}_j\in \mathbb{R}^{m }:\dot{h}^s_{i,j}(\mathbf{x},\mathbf{u})+\gamma h^s_{i,j}(\mathbf{x})\geq 0\} 
% \\
% \mathcal{B}^o(\mathbf{x},\mathbf{x}_o)&=\{\mathbf{u}\in \mathbb{R}^{d N}:\dot{h}^s_{i,k}(\mathbf{x},\mathbf{x}_o,\mathbf{u},\mathbf{u}^o)+\gamma h^s_{i,k}(\mathbf{x},\mathbf{x}_o)\geq 0, \;\forall i,k\}
\end{split}
\end{equation}
\begin{equation}\tag{9}\label{eq:prob_safe_bc}
    \begin{split}
       \text{Pr}(\mathbf{u}_i,\mathbf{u}_j\in \mathcal{B}_{i,j}^s(\mathbf{x}))&\geq \sigma \implies    \text{Pr}(\mathbf{x}_i,\mathbf{x}_j\in \mathcal{H}_{i,j}^s)\geq \sigma,\quad \forall i> j %\\
    %   \text{Pr}(\mathbf{u}_i,\mathbf{u}_k\in \mathcal{B}_{i,k}^o(\mathbf{x}, \mathbf{x}_o))&\geq \sigma_o \implies
    %   \text{Pr}(\mathbf{x}_i,\mathbf{x}_k\in \mathcal{H}_{i,k}^s)\geq \sigma_o,\quad \forall i,\;k
\end{split}
\end{equation}
% Recall from Lemma 1 (line 146-150) in deterministic setting that as long as the step-wise robot controller $\mathbf{u}_i,\mathbf{u}_j\in \mathcal{B}_{i,j}^s(\mathbf{x})=\{\mathbf{u}_i,\mathbf{u}_j\in \mathbb{R}^{d}:\dot{h}^s_{i,j}(\mathbf{x},\mathbf{u})+\gamma h^s_{i,j}(\mathbf{x})\geq 0\}$ at each time step, then the safety in terms of collision avoidance between robot $i,j$ will be ensured at all times.
Consider $\dot{h}_{i,j}^s(\mathbf{x},\mathbf{u})=\frac{\partial h^s_{i,j}}{\partial \mathbf{x}}(\mathbf{x})(\Delta F_{i,j}(\mathbf{x})+G_{i,j}(\mathbf{x})\mathbf{u}_{i,j}+\Delta \mathbf{w}_{i,j})$ %where $\mathbf{w}\in\mathbb{R}^{dN}$ is the joint process noise, 
, we can then re-write the sufficiency condition $\text{Pr}(\mathbf{u}_i,\mathbf{u}_j\in \mathcal{B}_{i,j}^s(\mathbf{x}))\geq \sigma$ in (\ref{eq:prob_safe_bc}) using (\ref{eq:safebarrier}) as follows:
\begin{align}\footnotesize
    \text{Pr}&(\mathbf{u}_i,\mathbf{u}_j\in \mathcal{B}_{i,j}^s(\mathbf{x}))\geq \sigma:\nonumber \\
    &\iff \text{Pr}\bigg(
    \frac{\partial h^s_{i,j}}{\partial \mathbf{x}}(\mathbf{x}) G_{i,j}(\mathbf{x})\mathbf{u}_{i,j}\geq -\gamma h^s_{i,j}(\mathbf{x})- \frac{\partial h^s_{i,j}}{\partial \mathbf{x}}(\mathbf{x})\big(\Delta F_{i,j}(\mathbf{x})+\Delta \mathbf{w}_{i,j}\big)
    \bigg)\bm{\geq} \sigma\label{eq:org_ineq}\tag{11}
\end{align}

% \begin{align}\label{eq:org_ineq}\footnotesize
%     \text{Pr}(\mathbf{u}_i,\mathbf{u}_j\in \mathcal{B}_{i,j}^s(\mathbf{x}))\geq \sigma:\iff \text{Pr}\bigg(
%     \frac{\partial h^s_{i,j}}{\partial \mathbf{x}}(\mathbf{x})G_{i,j}(\mathbf{x})\mathbf{u}_{i,j}\geq -\gamma h^s_{i,j}(\mathbf{x})- \frac{\partial h^s_{i,j}}{\partial \mathbf{x}}(\mathbf{x})\big(\Delta F_{i,j}(\mathbf{x})+\Delta \mathbf{w}_{i,j}\big)
%     \bigg)\bm{\geq} \sigma\tag{11}
% \end{align}
where 
\begin{equation}\label{eq:proof:terms}\tag{18}
    \begin{split}
        \frac{\partial h^s_{i,j}}{\partial \mathbf{x}}(\mathbf{x})G_{i,j}(\mathbf{x})\mathbf{u}_{i,j}&=2(\mathbf{x}_i-\mathbf{x}_j)^T\bigg(G_i(\mathbf{x}_i)\mathbf{u}_i-G_j(\mathbf{x}_j)\mathbf{u}_j\bigg)\nonumber \\
        \frac{\partial h^s_{i,j}}{\partial \mathbf{x}}(\mathbf{x})\bigg(\Delta F_{i,j}(\mathbf{x})+\Delta \mathbf{w}_{i,j}\bigg)&=2(\mathbf{x}_i-\mathbf{x}_j)^T\bigg(F_i(\mathbf{x}_i)-F_j(\mathbf{x}_j)+\mathbf{w}_i-\mathbf{w}_j\bigg)
    \end{split}
\end{equation}
Let's denote the process noise difference $\Delta \mathbf{w}_{i,j}=\mathbf{w}_i-\mathbf{w}_j\sim Q_{i,j}$ with the finite support  $\text{supp}(Q_{i,j})=\bigg[-(\Delta \mathbf{w}_i+\Delta \mathbf{w}_j),\;(\Delta \mathbf{w}_i+\Delta \mathbf{w}_j)\bigg] $ and state difference $\Delta \mathbf{x}_{i,j}=\mathbf{x}_i-\mathbf{x}_j\sim T_{i,j}$ with the finite support  $\text{supp}(T_{i,j})=\bigg[(\hat{\mathbf{x}}_i-\hat{\mathbf{x}}_j)-(\Delta \mathbf{v}_i+\Delta \mathbf{v}_j), \;(\hat{\mathbf{x}}_i-\hat{\mathbf{x}}_j)+(\Delta \mathbf{v}_i+\Delta \mathbf{v}_j)\bigg]$. 
%As the random variables $\mathbf{w}_i,\mathbf{w}_j,\mathbf{x}_i,\mathbf{x}_j$ are all independent with finite support, the distributions $T_{i,j}$ and $Q_{i,j}$ of $\Delta \mathbf{x}_{i,j}$ and $\Delta \mathbf{w}_{i,j}$ also have finite supports as defined.
Moreover, given the assumed uniform distributions of $\mathbf{w}_i,\mathbf{w}_j,\mathbf{x}_i,\mathbf{x}_j$, the distributions $T_{i,j}, Q_{i,j}$ are hence two different symmetric trapezoid distributions with finite supports.
Then by substituting (\ref{eq:proof:terms}) into (\ref{eq:org_ineq}) and after re-organization, we have
% \begin{equation}\label{eq:proof:new_ineq}\tag{19}
%     \text{Pr}\bigg(\underbrace{\Delta \mathbf{x}_{i,j}^2+2/\gamma \big(\Delta G_{i,j}^{\mathbf{u}}+\Delta F_{i,j}+\Delta \mathbf{w}_{i,j}\big)\Delta \mathbf{x}_{i,j}-R_s^2}_{D_{i,j}^{\mathbf{u}}}\geq 0 \bigg)\bm{\geq} \sigma
% \end{equation}
\begin{equation}\label{eq:proof:new_ineq}\tag{19}
    \text{Pr}\bigg(
    \bigg[\Delta \mathbf{x}_{i,j}+\overbrace{\frac{ G_{i,j}{\mathbf{u}_{i,j}}+\Delta F_{i,j}+\Delta \mathbf{w}_{i,j}}{\gamma}}^{\textstyle\dot{\mathbf{x}}_i-\dot{\mathbf{x}}_j}\bigg]^2\geq R_{ij}^2+\bigg[\overbrace{\frac{G_{i,j}{\mathbf{u}_{i,j}}+\Delta F_{i,j}+\Delta \mathbf{w}_{i,j}}{\gamma}}^{\textstyle\dot{\mathbf{x}}_i-\dot{\mathbf{x}}_j}\bigg]^2
    \bigg)\bm{\geq} \sigma
\end{equation}
where
\begin{equation}\label{eq:proof:diff_term}\tag{20}
    G_{i,j}{\mathbf{u}_{i,j}}=G_i(\mathbf{x}_i)\mathbf{u}_i-G_j(\mathbf{x}_j)\mathbf{u}_j\;,\quad \Delta F_{i,j}=F_i(\mathbf{x}_i)-F_j(\mathbf{x}_j)\;,\quad R_{ij} = R_i+R_j>0
\end{equation}
Thus consider the following set of random variable $\Delta \mathbf{x}_{i,j}$ from its own finite support and (\ref{eq:proof:new_ineq}): 
\begin{equation}\footnotesize\label{eq:proof:delta_x_ij_set}\tag{21}
    \begin{split}
    \Omega_{i,j}(\Delta \mathbf{x}_{i,j})&=\text{supp}(T_{i,j})= \bigg[(\hat{\mathbf{x}}_i-\hat{\mathbf{x}}_j)-(\Delta \mathbf{v}_i+\Delta \mathbf{v}_j), \quad(\hat{\mathbf{x}}_i-\hat{\mathbf{x}}_j)+(\Delta \mathbf{v}_i+\Delta \mathbf{v}_j) \bigg]\\
     \Omega^{\mathbf{u}}_{i,j}(\Delta \mathbf{x}_{i,j})&=\bigg\{\Delta \mathbf{x}_{i,j}\in \mathbb{R}^d\;\bigg|\;
    \bigg[\Delta \mathbf{x}_{i,j}+\overbrace{\frac{G_{i,j}{\mathbf{u}_{i,j}}+\Delta F_{i,j}+\Delta \mathbf{w}_{i,j}}{\gamma}}^{\textstyle\dot{\mathbf{x}}_i-\dot{\mathbf{x}}_j}\bigg]^2\geq R_{ij}^2+\bigg[\overbrace{\frac{G_{i,j}{\mathbf{u}_{i,j}}+\Delta F_{i,j}+\Delta \mathbf{w}_{i,j}}{\gamma}}^{\textstyle\dot{\mathbf{x}}_i-\dot{\mathbf{x}}_j}\bigg]^2\bigg\}   
    \end{split}
\end{equation}
Note that the set of $\Omega^{\mathbf{u}}_{i,j}(\Delta \mathbf{x}_{i,j})$ representing the space outside a $(d-1)-$sphere for $\Delta \mathbf{x}_{i,j}$ in $d-$dimensional space. It is determined by the pairwise value of $\mathbf{u}_i,\mathbf{u}_j$ through $G_{i,j}{\mathbf{u}_{i,j}}=G_i(\mathbf{x}_i)\mathbf{u}_i-G_j(\mathbf{x}_j)\mathbf{u}_j$ as defined in (\ref{eq:proof:diff_term}).
It is thus straightforward to show that the condition in (\ref{eq:proof:new_ineq}) is equivalent to:
\begin{equation}\label{eq:proof:set}\tag{22}
    \text{Pr}\bigg(\Delta\mathbf{x}_{i,j}\in \Omega_{i,j}\cap\Omega_{i,j}^{\mathbf{u}}\bigg)\bm{\geq} \sigma
\end{equation}
To prove the guaranteed existence of PrSBC, we need to show there always exists at least one solution of pairwise $\mathbf{u}_i,\mathbf{u}_j$ such that (\ref{eq:proof:set}) holds for any given value of $\sigma \in [0,1]$. First let's consider any pairwise $\mathbf{u}_i=\mathbf{u}^0_i,\mathbf{u}_j=\mathbf{u}_j^0$ leading to the joint control inputs $\mathbf{u}^0$ such that $ \dot{\mathbf{x}}_i-\dot{\mathbf{x}}_j=0$ in (\ref{eq:proof:delta_x_ij_set}), then we have the following condition representing the space outside the $(d-1)-$sphere for $\Delta \mathbf{x}_{i,j}$:
\begin{equation}\label{eq:proof:sphere}\tag{23}
    \Omega^{\mathbf{u}^0}_{i,j}(\Delta \mathbf{x}_{i,j})=\bigg\{\Delta \mathbf{x}_{i,j}\in \mathbb{R}^d\;\bigg|\;\Delta \mathbf{x}_{i,j}^2\geq R_{ij}^2\bigg\}   
\end{equation}
Recall that all pairwise robots are assumed to be initially collision-free, i.e. $\Delta \mathbf{x}_{i,j}^2\geq R_{ij}^2$, thus (\ref{eq:proof:set}) holds true at all times for any given $\sigma\in[0,1]$ since $\text{Pr}\bigg(\Delta\mathbf{x}_{i,j}\in \Omega_{i,j}\cap\Omega_{i,j}^{\mathbf{u}^0}\bigg)=1$ under one possible solution of joint control inputs $\mathbf{u}=\mathbf{u}^0$ that leads to $ \dot{\mathbf{x}}_i-\dot{\mathbf{x}}_j=0$. More generally, as the value of $||\dot{\mathbf{x}}_i-\dot{\mathbf{x}}_j||$ grows from $0$ with other value of $\mathbf{u}\neq \mathbf{u}^0$, the corresponding $(d-1)-$sphere of $\Omega^{\mathbf{u}}_{i,j}(\Delta \mathbf{x}_{i,j})$ in (\ref{eq:proof:delta_x_ij_set}) will continuously shift from the origin and gradually intersect with the bounding box of $\Omega_{i,j}(\Delta \mathbf{x}_{i,j})=\text{supp}(T_{i,j})$ in (\ref{eq:proof:delta_x_ij_set}). This leads to $\text{Pr}\bigg(\Delta\mathbf{x}_{i,j}\in \Omega_{i,j}\cap\Omega_{i,j}^{\mathbf{u}}\bigg)$ continuously decrease from 1 to 0. Hence for any given value $\sigma\in[0,1]$, it is always feasible to solve for at least a particular pairwise $\mathbf{u}_i,\mathbf{u}_j$ such that $\text{Pr}\bigg(\Delta\mathbf{x}_{i,j}\in \Omega_{i,j}\cap\Omega_{i,j}^{\mathbf{u}}\bigg)= \sigma$, or $\text{Pr}\bigg(\Delta\mathbf{x}_{i,j}\in \Omega_{i,j}\cap\Omega_{i,j}^{\mathbf{u}}\bigg)> \sigma$ so that (\ref{eq:proof:set}) holds true. This pairwise $\mathbf{u}_i,\mathbf{u}_j$ could then serve as a hyperplane dividing the corresponding subspace of joint control space of $\mathbf{u}$ with one side in the form of (\ref{eq:def_PrSBC}) rendering the satisfying probabilistic safety between robot $i,j$. And by repeatedly updating the hyperplane at each time step in (\ref{eq:def_PrSBC}), the constrained step-wise controllers $\mathbf{u}_i,\mathbf{u}_j$ ensure the probabilistic safety is guaranteed at all times given the forward invariance in (\ref{eq:prob_safe_bc}). It is then straightforward to extend to all pairwise inter-robot collision avoidance constraints and thus concludes the proof.
\end{proof}
\newpage
\subsection{Computation of PrSBC} %(Line 197-214)} 
In this part we will provide computation of PrSBC that yield the solution in equation (12) and (13).
%in line 205.
Lets consider inter-robot collision avoidance first. Given any confidence level $\sigma\in[0,1]$, from Section A.1 the equivalent chance constraint of $\text{Pr}(\mathbf{u}_i,\mathbf{u}_j\in \mathcal{B}_{i,j}^s(\mathbf{x}))\geq \sigma$ in (\ref{eq:org_ineq}) and its re-written form in (\ref{eq:proof:new_ineq}) can be transformed into a deterministic linear constraint over pairwise controllers $\mathbf{u}_i,\mathbf{u}_j$ in the form of (\ref{eq:def_PrSBC}). While it is computationally intractable to get  closed form solutions from (\ref{eq:proof:new_ineq}), we obtain an approximate solution by considering the condition on each individual dimension 
$\Delta \mathbf{x}_{i,j}^l\in\{\Delta \mathbf{x}_{i,j}^1,\ldots,\Delta \mathbf{x}_{i,j}^d\}\subset\mathbb{R}^d$ of $\Delta\mathbf{x}_{i,j}, \forall l=1,\ldots,d$ for (\ref{eq:proof:new_ineq}). Hence, we introduce a sufficiency condition to (\ref{eq:proof:new_ineq}) in each dimension as follows, so that ensuring  (\ref{eq:proof:quad})$\implies$(\ref{eq:proof:new_ineq}).
\begin{equation}\label{eq:proof:quad}\tag{24}
\text{Pr}\bigg(
    (\Delta \mathbf{x}^l_{i,j})^2 +2\cdot \frac{(G_{i,j}\mathbf{u}_{i,j})_l+\Delta F^l_{i,j}}{\gamma}\Delta \mathbf{x}^l_{i,j}\geq R^2_{ij}-B^l_{i,j}
    %-\frac{2}{\gamma}\Delta \mathbf{w}^l_{i,j}\Delta \mathbf{x}^l_{i,j}
    \bigg)\bm{\geq} \sigma
\end{equation}
where $(G_{i,j}\mathbf{u}_{i,j})_l = (G_i\mathbf{u}_i -G_j \mathbf{u}_j)_l\in\mathbb{R}$ and $\Delta F^l_{i,j}=F^l_i-F^l_j\in \mathbb{R}$ denote the $l$th element of $G_{i,j}\mathbf{u}_{i,j}\in \mathbb{R}^{d\times 1}$ and $\Delta F_{i,j}\in \mathbb{R}^{d\times 1}$ respectively. $B^l_{i,j} = -\frac{2}{\gamma}\max||\Delta \mathbf{w}^l_{i,j}||\cdot||\Delta \mathbf{x}^l_{i,j}||\in\mathbb{R}$ with $\Delta \mathbf{w}^l_{i,j}\in\mathbb{R}$ as the $l$th element in $\Delta \mathbf{w}_{i,j}\in \mathbb{R}^d$. 
To simplify the discussion we assume piece-wise $G_i,G_j\in\mathbb{R}^{d\times m},F_i,F_j\in\mathbb{R}^{d\times 1}$ in (1) 
%are known and deterministic. 
can be approximated by Taylor theorem at points $\hat{x}_i,\hat{x}_j$ respectively.
Then, we have equivalent condition of (\ref{eq:proof:quad}) as follows
\begin{equation}\label{eq:comput:new_ineq}\tag{25}
\text{Pr}\bigg(\Delta \mathbf{x}^l_{i,j}\leq -\frac{(G_{i,j}{\mathbf{u}_{i,j}})_l+\Delta F^l_{i,j}}{\gamma}-D_{i,j}^{l}\;\;\text{OR}\;\;\Delta \mathbf{x}^l_{i,j}\geq -\frac{(G_{i,j}{\mathbf{u}_{i,j}})_l+\Delta F^l_{i,j}}{\gamma}+D_{i,j}^{l}  \bigg)\bm{\geq}\sigma
\end{equation}
where
\begin{equation*}%\footnotesize
D_{i,j}^{l}=\sqrt{\frac{\big((G_{i,j}{\mathbf{u}_{i,j}})_l+\Delta F^l_{i,j}\big)^2}{\gamma^2}+R_{ij}^2-B^l_{i,j}}
\end{equation*}
Recall the finite support of $\Delta \mathbf{x}_{i,j}$ with its symmetric trapezoid distribution $T_{i,j}$ in (\ref{eq:proof:delta_x_ij_set}), we can find alternative condition to enforce either of the condition in (\ref{eq:comput:new_ineq}), e.g. $\text{Pr}(\Delta \mathbf{x}^l_{i,j}\leq \cdot)\geq \sigma$ or $\text{Pr}(\Delta \mathbf{x}^l_{i,j}\geq \cdot)\geq \sigma$ so that (\ref{eq:comput:new_ineq}) is definitely lower bounded by $\sigma$. 
%To simplify the discussions, we consider the condition on each dimension $l$. 
We assume $\sigma>0.5$ and denote $e_{i,j}^{l,1}=\Phi^{-1}(\sigma)$ and $e_{i,j}^{l,2}=\Phi^{-1}(1-\sigma)$ with $\Phi^{-1}(\cdot)$ as the inverse cumulative distribution function (CDF) of the random variable $\Delta \mathbf{x}^l_{i,j}=\mathbf{x}^l_i-\mathbf{x}^l_j$ in (\ref{eq:proof:terms}) along each $l$th dimension. We have $\sigma>0.5\implies e_{i,j}^{l,1}>e_{i,j}^{l,2}$. Thus, we derive a formal sufficiency condition for (\ref{eq:comput:new_ineq}) as follows.
\begin{equation}\label{eq:PrSBC_element}\tag{12}
     \exists l=1,\ldots,d: \quad -2\mathbf{e}^l_{i,j}(G_{i,j}\mathbf{u}_{i,j})_l/\gamma \leq (\mathbf{e}^l_{i,j})^2-R_{ij}^2+B_{ij}^l + 2\mathbf{e}^l_{i,j}\Delta F^l_{i,j}/\gamma
\end{equation}
where 
\begin{equation}\footnotesize \label{eq:inv_cdf}\nonumber
        \mathbf{e}_{i,j}^{l}=\begin{cases}
        e_{i,j}^{l,2},\qquad e_{i,j}^{l,2}>0\\
        e_{i,j}^{l,1},\qquad e_{i,j}^{l,1}<0\\
        0, \qquad e_{i,j}^{l,2}\leq 0 \;\text{and}\; e_{i,j}^{l,1}\geq 0
        \end{cases}
\end{equation}
Note that $\mathbf{e}^l_{i,j}=0$ implies the two robots $i$ and $j$ overlap along the $l$th dimension, e.g. two drones flying to the same 2D locations but with different altitudes. As it is assumed any pairwise robots are initially collision free and from the forward invariance property discussed above, $\mathbf{e}^l_{i,j}=0$ only happens along at most $d-1$ dimensions. To that end, we can formally construct the PrSBC as follows in the closed form of (\ref{eq:def_PrSBC}), by adding up the linear constraints in (\ref{eq:PrSBC_element}) for all $d$ dimensions.
\begin{equation}\label{eq:PrSBC_exact_sol}\tag{13}
   \mathcal{S}_u^\sigma = \{\mathbf{u}\in\mathbb{R}^{mN}| -2\mathbf{e}^T_{i,j}(G_{i}\mathbf{u}_i - G_j\mathbf{u}_j)/\gamma \leq ||\mathbf{e}_{i,j}||^2-d\cdot R_{ij}^2+B_{ij} 
    + 2\mathbf{e}^T_{i,j}\Delta F_{i,j}/\gamma, 
    \quad \forall i>j\}
\end{equation}
where $\mathbf{e}_{i,j}=[\mathbf{e}_{i,j}^{1},\ldots,\mathbf{e}_{i,j}^{d}]^T\in \mathbb{R}^{d\times 1}$ and $B_{ij} = \Sigma_{l=1}^d B_{ij}^l$.
%\ws{the notation for $\|\mathbf{e}_{i,j}\|$ is weird: this usually represents the $\ell_2$ norm, which is a scalar. I believe here it should be a d-dim vector?}
This invokes a set of pairwise linear constraints over the robot controllers such that the inter-robot probabilistic collision avoidance in (4) holds true at all times. 
Note the PrSBC constraint in (\ref{eq:PrSBC_exact_sol}) is a conservative approximation of (\ref{eq:PrSBC_element}) by adding up the constraints for each dimension, and therefore guarantee $\text{Pr}(\mathbf{u}_i,\mathbf{u}_j\in \mathcal{B}_{i,j}^s(\mathbf{x}))\geq \sigma$.

\end{document}